\documentclass{article}

\PassOptionsToPackage{numbers, compress, sort}{natbib}

\usepackage[preprint]{neurips_2026}

\usepackage[utf8]{inputenc}
\usepackage[T1]{fontenc}
\usepackage[hypertexnames=false]{hyperref}
\usepackage{booktabs}
\usepackage{amsfonts}
\usepackage{nicefrac}
\usepackage{microtype}
\usepackage{xcolor}

\usepackage{amsmath,amssymb,amsthm}
\usepackage{mathtools}
\usepackage{graphicx}
\usepackage{xurl}
\usepackage{enumitem}
\usepackage{algorithm}
\usepackage{algpseudocode}
\usepackage{tcolorbox}
\usepackage{multirow}
\usepackage{colortbl}
\usepackage{wrapfig}
\usepackage{subcaption}
\usepackage{float}
\usepackage{titlesec}

\titlespacing*{\section}{0pt}{8pt}{6pt}
\titlespacing*{\subsection}{0pt}{3pt}{3pt}

\hypersetup{hidelinks}

\definecolor{tableheader}{RGB}{175,210,232}
\definecolor{tablelight}{RGB}{234,245,251}
\definecolor{tabledark}{RGB}{248,251,254}
\definecolor{tableruler}{RGB}{130,175,205}

\colorlet{mintheader}{tableheader}
\colorlet{mintstripe}{tablelight}
\colorlet{softgray}{tabledark}


\newtheorem{theorem}{Theorem}

\newtheorem{proposition}[theorem]{Proposition}

\newcommand{\best}[1]{\textbf{#1}}
\newcommand{\up}[1]{\textsuperscript{\(\uparrow#1\)}}
\newcommand{\downnote}{\textsuperscript{\(\downarrow\)}}

\title{Learning-Zone Energy: Online Data Selection for Efficient RL Post-Training}

\author{
\textsuperscript{1}Peng Cui\thanks{Equal contribution.},
~~\textsuperscript{2}Boyao Yang\footnotemark[1],
~~\textsuperscript{1}Jun Zhu\thanks{Corresponding author.} \\
\textsuperscript{1} Dept. of Comp. Sci. \& Tech., Institute for AI, BNRist Center, \\
Tsinghua-Bosch Joint ML Center, THBI Lab, Tsinghua University, Beijing 100084, China \\
\textsuperscript{2} Dept. of Automation, Tsinghua University, Beijing 100084, China \\
\texttt{xpeng.cui@gmail.com, boyaoyang16@gmail.com, dcszj@tsinghua.edu.cn}
}

\begin{document}
\addtocontents{toc}{\protect\setcounter{tocdepth}{-1}}

\maketitle

\begin{abstract}
Reinforcement Learning (RL) post-training has emerged as the dominant paradigm for eliciting mathematical reasoning in Large Language Models (LLMs), yet prevailing techniques such as GRPO and DAPO distribute rollout and gradient budgets nearly uniformly across prompts, squandering compute on samples that are already mastered or remain far beyond the model's current capability.
To address this fundamental inefficiency, we propose \textbf{Learning-Zone Energy}(\textsc{LZE}), a theoretically grounded, fully online data selection framework that concentrates computation on the model's \emph{active learning frontier}.
At its core, we define a closed-form \emph{Learning-Zone Energy Score} that fuses three complementary signals---an initial-difficulty anchor, a normalized outcome-uncertainty term, and a pass-rate momentum---into a single scalar that is provably aligned with the expected magnitude of group-relative policy gradient updates.
A forward pruner with replay further reduces wall-clock time cost by skipping rollout generation for persistently solved prompts while periodically checking for forgetting.
Evaluated on Qwen-family models (1.5B--8B) across GSM8K, MATH and DAPO-MATH, our method retains only 40\% of the training data per step yet matches or surpasses full-data baselines, with especially pronounced out-of-distribution gains on AIME25($+45.9\%$) and AMC23($+18.2\%$), alongside an estimated 36\% reduction in training FLOPs. Our code is available at \url{https://github.com/Stellaris167/LZE}.
\end{abstract}

\section{Introduction}

Reinforcement Learning (RL) with verifiable rewards has emerged as the dominant paradigm for improving mathematical reasoning in Large Language Models~\cite{brown2020language, openai2023gpt4}. In modern group-based RL pipelines such as GRPO~\cite{deepseek2024deepseekmath, deepseek2025deepseekr1} and DAPO~\cite{yu2025dapo}, the model generates multiple rollouts for each prompt, aggregates rewards within each group, and performs policy updates accordingly. However, these methods treat prompt groups nearly uniformly during training, even though the groups are highly heterogeneous in their usefulness: some prompts are already solved reliably and therefore contribute little additional learning signal, whereas others remain far beyond the model's current capability and produce almost no actionable gradient information. Consequently, a considerable fraction of the rollout and optimization budget is spent on prompt groups that are only weakly informative for policy improvement.

To mitigate this inefficiency, prior work has explored curriculum-style training, offline filtering, and online rejection-based heuristics. Early work on curriculum learning and self-paced learning showed that training efficiency can benefit from organizing examples by difficulty \cite{bengio2009curriculum,kumar2010self}. In the context of LLM post-training, offline filtering and self-training style methods can remove obviously uninformative examples before optimization begins \cite{singh2024beyond,gulcehre2023rest}.
However, such strategies are inherently static and cannot adapt as the policy evolves and the effective difficulty of a prompt changes over time. Online rejection methods are more adaptive, yet they typically apply binary keep-or-drop decisions once a prompt becomes all-correct or all-incorrect \cite{dong2023raft,wei2025minimalist}, making them insensitive to finer-grained differences in informativeness. A more recent online selection method~\cite{mao2026dynamicspredictive} avoids redundant rollouts by predicting prompt solvability via a hidden Markov model, but requires maintaining a learned generative state model, inevitably incurring additional computational overhead.
Therefore, existing approaches are often either static, overly rigid, or expensive. What is still missing is a lightweight online criterion that continuously adapts to the current policy, assigns prompt groups a graded notion of informativeness, and improves training efficiency without relying on heavy auxiliary machinery.

To address this gap, we propose \textbf{Learning-Zone Energy} (\textsc{LZE}), a simple and fully online data selection framework for RL post-training. Our key idea is to prioritise prompt groups in the model's current \textit{learning zone}: groups that are neither already mastered nor completely out of reach, but actively capable of producing meaningful policy updates. Concretely, we define an \emph{Energy Score} that combines three signals: an \emph{initial-difficulty anchor} that preserves attention on historically challenging prompts, a \emph{normalized uncertainty term} based on the current group pass rate that emphasises prompts near the decision boundary, and a \emph{momentum term} that highlights groups on which the policy is actively changing.
Beyond this practical intuition, we provide theoretical analysis for the design: in group-relative RL with Bernoulli rewards, the uncertainty term is aligned with the expected GRPO gradient variance under a standard fixed-baseline approximation (Theorem~\ref{thm:energy_gradient}), thereby explaining why frontier prompt groups are especially informative. We further show that the momentum term corresponds to the output of a
causal high-pass filter on the pass-rate sequence (Proposition~\ref{prop:ema_conv}), and that the score as a whole admits an interpretation as a sample-level attention mechanism. To further reduce training cost, we complement the backward scorer with a replay-based \emph{forward pruner} that skips rollout generation for prompts that have remained solved over consecutive epochs, with periodic replay to detect
forgetting.

Our contributions are threefold:
\begin{itemize}[leftmargin=*]
  \item \textbf{A principled online data selection framework.}
        The Learning-Zone Energy Score is a closed-form, per-step criterion that integrates difficulty history, outcome uncertainty, and pass-rate momentum into a single scalar provably aligned with the GRPO gradient variance. As it operates solely on group-level pass-rate statistics, \textsc{LZE} is compatible with any group-based RL post-training algorithm and requires no modification to the underlying optimizer.
        Retaining only 40\% of prompts per training step, \textsc{LZE} matches and even surpasses full-data baselines across all evaluated configurations, with particularly pronounced out-of-distribution gains: up to $+45.9\%$ on AIME\,25 and $+18.2\%$ on AMC\,23.

  \item \textbf{Theoretical analysis of gradient informativeness in RL post-training.}
        We prove that the Bernoulli variance $4p(1{-}p)$ governs the GRPO gradient variance up to a small-step-size approximation
        (Theorem~\ref{thm:energy_gradient}), establish an EMA--convolution duality linking the momentum term to causal high-pass filtering of the pass-rate sequence (Proposition~\ref{prop:ema_conv}).

  \item \textbf{Compute efficiency via dual-stage selection.}
        The backward selector and forward pruner together reduce theoretical training FLOPs by an estimated 36\%, consistent with the empirical FLOPs measurements across all model and dataset configurations reported in Table~\ref{tab:pass1_all}.
\end{itemize}

\begin{figure}[t]
\centering
\includegraphics[width=0.99\textwidth]{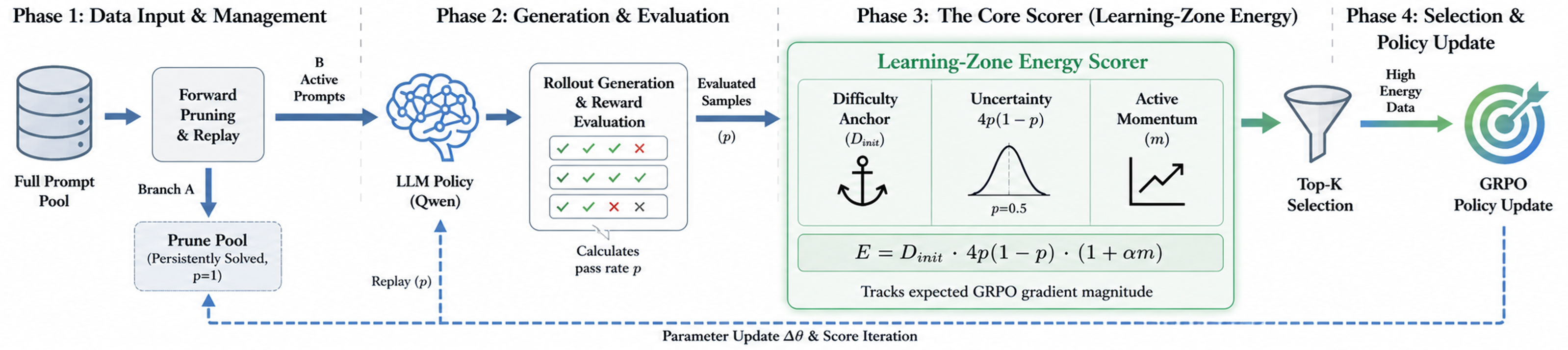}
\caption{\textbf{Overview of the proposed method.} The backward scorer computes the Learning-Zone Energy Score for each prompt group at every training step, guiding Top-$K$ selection for policy updates. The forward pruner tracks group pass rates at the epoch level and skips rollout generation for groups that have been stably solved, with replay providing a safety mechanism to detect forgetting.}
\label{fig:pipeline}
\vspace{-10pt}
\end{figure}
\section{Background}
\label{sec:background}
We review the group-based RL post-training paradigm, introduce the notation used throughout, and identify the gradient-informativeness property that motivates our approach.

\textbf{RL post-training for mathematical reasoning.}
Given a prompt dataset $\mathcal{D}=\{x_i\}_{i=1}^{N}$ and a policy $\pi_\theta$,
RL post-training maximises the expected reward:
\begin{equation}
\label{eq:rl_objective}
\max_\theta\;\mathbb{E}_{x\sim\mathcal{D},\;y\sim\pi_\theta(\cdot|x)}\bigl[r(x,y)\bigr],
\end{equation}
where $r(x,y)\in\{0,1\}$ is a rule-based binary verifier that checks the correctness
of response $y$ for prompt $x$.
This binary reward structure, standard in mathematical reasoning, makes the per-prompt \emph{pass rate} a natural proxy for the policy's current capability on that problem.

\textbf{Group Relative Policy Optimization (GRPO).}
GRPO~\cite{deepseek2024deepseekmath,deepseek2025deepseekr1} optimizes
Eq.~\eqref{eq:rl_objective} without a learned value network by normalizing rewards within each sampled group.
For prompt~$x_i$, the policy generates $n$ i.i.d.\ rollouts
$y_{i,1},\ldots,y_{i,n}\sim\pi_\theta(\cdot|x_i)$ with binary rewards
$r_{i,k}=r(x_i,y_{i,k})$.
The empirical pass rate $p_i^{(t)}=n^{-1}\sum_k r_{i,k}$ serves as the group
baseline, and the normalized group-relative advantage is
\begin{equation}
\label{eq:advantage}
\tilde{A}_{i,k} \;=\; r_{i,k} - p_i^{(t)}.
\end{equation}
GRPO then maximizes the clipped surrogate objective
\begin{equation}
\label{eq:grpo_obj}
\mathcal{L}_{\mathrm{GRPO}}(\theta)
= \frac{1}{N}\sum_{i=1}^{N}\frac{1}{n}\sum_{k=1}^{n}
  \min\!\left(
    \frac{\pi_\theta(y_{i,k}\mid x_i)}{\pi_{\theta_{\mathrm{old}}}(y_{i,k}\mid x_i)}
    \tilde{A}_{i,k},\;
    \mathrm{clip}\!\left(
      \frac{\pi_\theta(y_{i,k}\mid x_i)}{\pi_{\theta_{\mathrm{old}}}(y_{i,k}\mid x_i)},
      1{-}\epsilon, 1{+}\epsilon
    \right)\tilde{A}_{i,k}
  \right),
\end{equation}
where the clipped importance ratio prevents $\pi_\theta$ from deviating excessively
from the behaviour policy $\pi_{\theta_{\mathrm{old}}}$~\cite{schulman2017proximal}.
DAPO~\cite{yu2025dapo} removes the KL-divergence penalty from the reference policy and adopts a token-level entropy-aware clipping to further stabilize training, and all other aspects of the GRPO formulation are unchanged.

\textbf{Gradient informativeness and the learning zone.}
Analyzing the gradient of Eq.~\eqref{eq:grpo_obj} reveals a critical non-uniformity across prompt groups. Under the standard fixed-baseline approximation (treating $p_i^{(t)}$ as constant when differentiating), the gradient contribution from prompt~$i$ is proportional to
\begin{equation}
\label{eq:grpo_grad}
\nabla_\theta\mathcal{L}^{(i)}
\;\propto\;
\frac{1}{n}\sum_{k=1}^{n}
  \tilde{A}_{i,k}\,\nabla_\theta\log\pi_\theta(y_{i,k}\mid x_i).
\end{equation}
When $p_i^{(t)}\!\to\!1$, all rollouts succeed and every advantage
$\tilde{A}_{i,k}\!\approx\!0$, so the gradient vanishes regardless of the policy's score-function magnitude. Symmetrically, when $p_i^{(t)}\!\to\!0$, all rollouts fail and advantages again collapse.
As Theorem~\ref{thm:energy_gradient} formalizes, the variance of this gradient
estimator is approximately proportional to the Bernoulli variance $p_i(1-p_i)$,
which peaks at $p=0.5$ and vanishes at both extremes.
This identifies the \emph{learning zone}, the intermediate pass-rate regime, as the locus of maximum gradient informativeness, and motivates concentrating the training budget accordingly.

\textbf{Online data selection.}
At each training step~$t$, the \emph{online data selection problem} is to identify a subset $\mathcal{S}_t\subseteq\mathcal{D}$ with $|\mathcal{S}_t|=\lfloor\kappa N\rfloor$ ($\kappa\in(0,1)$) that maximizes expected policy improvement per unit of compute, using only information available at step~$t$.
A practical criterion must be (i)~updated online without offline pre-processing, (ii)~aligned with the gradient signal characterised by Eq.~\eqref{eq:grpo_grad}, and (iii)~computationally cheap, requiring no auxiliary models beyond the standard rollout phase.
\section{Methodology}
\label{sec:method}
This section presents the \textsc{LZE} framework in three parts. We first introduce the Learning-Zone Energy Score and explain the design rationale behind each of its three components (Section~\ref{sec:score}). We then provide theoretical support for the design by connecting the score to the GRPO gradient variance and classical signal-processing constructs (Section~\ref{sec:theory}). Finally, we describe how the Energy Score drives backward Top-$K$ selection and how a complementary forward pruner with replay reduces rollout generation cost (Section~\ref{sec:algorithm}).

\subsection{The Learning-Zone Energy Score}
\label{sec:score}
The key challenge in online data selection for RL post-training is to construct a criterion that remains adaptive to the policy's evolving capability while staying aligned with the underlying gradient signal. Motivated by the observation in Section~\ref{sec:background} that the GRPO gradient variance for prompt~$i$ scales with $p_i(1-p_i)$, we propose a score that captures three complementary aspects of prompt informativeness: intrinsic difficulty, current outcome uncertainty, and recent learning dynamics. Formally, for prompt~$i$ at training step~$t$, we define the \textbf{Learning-Zone Energy Score} as
\begin{equation}
\label{eq:energy}
E_i^{(t)} \;=\; \underbrace{D_i^{(0)}}_{\text{difficulty anchor}}
             \;\cdot\;\underbrace{4p_i^{(t)}\!\left(1-p_i^{(t)}\right)}_{\text{outcome uncertainty}}
             \;\cdot\;\underbrace{\left(1+\alpha\,m_i^{(t)}\right)}_{\text{momentum}},
\end{equation}
where the three multiplicative factors correspond to the three signals above and each plays a distinct and non-redundant role.

\textbf{Outcome uncertainty: $4p_i^{(t)}(1-p_i^{(t)})$.}
The central term is the normalized Bernoulli variance of the current pass rate, which lies in $[0,1]$, peaks at $p=0.5$, and vanishes at both extremes. This term directly implements the learning-zone intuition: it concentrates selection on prompt groups where outcomes are mixed, i.e., neither consistently correct nor consistently incorrect, and suppresses both extremes.
By Theorem~\ref{thm:energy_gradient}, the GRPO gradient variance for prompt~$i$ is approximately proportional to $p_i(1-p_i)\,G_i(\theta)$, confirming this term as the primary driver of gradient informativeness. Unlike Focal Loss~\cite{lin2017focal}, which uses $(1-p)^\gamma$ and penalises easy positives but not hard negatives, $4p(1-p)$ is \emph{symmetric}: it simultaneously down-weights both extremes, providing a balanced treatment of the two failure modes.

\textbf{Difficulty anchor: $D_i^{(0)}$.}
Not all prompts near $p=0.5$ are equally valuable. Two prompts may have the same current pass rate but very different histories: one that was initially easy and has since regressed conveys different information from one that was initially hard and is now being cracked for the first time. We record the pass rate at the start of training and set $D_i^{(0)}=1-p_i^{(0)}\in[0,1]$, encoding the intrinsic hardness of each prompt as a fixed prior that is never updated. This anchor prevents the curriculum from collapsing toward problems that were initially easy but happen to sit transiently near $p=0.5$ due to policy noise.

\textbf{Pass-rate momentum: $m_i^{(t)}$.}
Even with difficulty and uncertainty both captured, a prompt stagnating near $p=0.5$ for many steps contributes less than one where the policy is actively improving. We track learning dynamics via an exponential moving average $\mu_i^{(t)}=\lambda\mu_i^{(t-1)}+(1-\lambda)p_i^{(t)}$ with decay $\lambda\in(0,1)$, and define the momentum $m_i^{(t)}=p_i^{(t)}-\mu_i^{(t-1)}$, which is positive when the policy is improving and near zero when learning stagnates. The factor $(1+\alpha m_i)$ amplifies the improvement in prompts and is neutral otherwise. As formalized in Section~\ref{sec:theory}, $m_i^{(t)}$ is the output of a causal high-pass filter on the pass-rate sequence, capturing rapid changes while attenuating slow trends.

To provide intuition for why \textsc{LZE} improves over uniform training, Figure~\ref{fig:case_study} (Appendix~\ref{sec:case_study}) illustrates three representative prompt types: a trivially solved prompt ($p\approx1$) and an unsolvable prompt ($p\approx0$), both
filtered out due to near-zero Energy, and a frontier prompt ($p\approx0.5$) that receives the highest score and is selected for policy updates.

\subsection{Theoretical Analysis}
\label{sec:theory}
We now explain why the uncertainty term in the Energy Score is well motivated from the perspective of prompt-level gradient informativeness, and provide the signal-processing characterization of the momentum term. Throughout, let $p_i$ denote the true pass rate under the current policy, and let $G_i(\theta)\coloneqq\mathbb{E}_{y\sim\pi_\theta(\cdot|x_i)}[\|\nabla_\theta\log\pi_\theta(y\mid x_i)\|^2]$ denote the expected squared score-function norm.

\begin{theorem}[Uncertainty term and gradient variance]
\label{thm:energy_gradient}
Consider prompt~$i$ with $n$ i.i.d.\ rollouts and gradient estimator $g_i=n^{-1}\sum_{k}\tilde{A}_k\nabla_\theta\log\pi_\theta(y_k\mid x_i)$, where $\tilde{A}_k=r_k-p_i$ under the standard fixed-baseline approximation~\cite{williams1992simple}.
Assume the score-function norm is approximately homogeneous across reward outcomes as in Eq.~\eqref{eq:score_homogeneity}:
\begin{equation}
\label{eq:score_homogeneity}
\mathbb{E}\!\bigl[\|\nabla_\theta\log\pi_\theta(y\mid x_i)\|^2\mid r=1\bigr]
\;\approx\;
\mathbb{E}\!\bigl[\|\nabla_\theta\log\pi_\theta(y\mid x_i)\|^2\mid r=0\bigr]
= G_i(\theta).
\end{equation}
Then Eq.~\eqref{eq:grad_var} holds:
\begin{equation}
\label{eq:grad_var}
\mathrm{Var}(g_i)
\;\approx\;
\frac{p_i(1-p_i)}{n}\;G_i(\theta),
\end{equation}
where the approximation holds when $\|\nabla_\theta p_i\|^2\ll p_i(1-p_i)G_i(\theta)$. The variance is maximized at $p_i=0.5$ and vanishes at both $p_i=0$ and $p_i=1$. Consequently, $E_i$ is monotonically related to $\mathrm{Var}(g_i)$, modulated by the difficulty anchor and momentum.
\end{theorem}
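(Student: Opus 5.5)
The proof reduces $\mathrm{Var}(g_i)$ to a single-rollout computation and then evaluates the second moment by conditioning on the binary reward. Write $s_k\coloneqq\nabla_\theta\log\pi_\theta(y_k\mid x_i)$ and $h_k\coloneqq (r_k-p_i)s_k$. Because $p_i$ is the true pass rate and is held fixed (the fixed-baseline approximation), the $h_k$ are i.i.d. Hence
\begin{equation*}
\mathrm{Var}(g_i)=\frac{1}{n}\mathrm{Var}(h_1)=\frac{1}{n}\Bigl(\mathbb{E}\|h_1\|^2-\|\mathbb{E}h_1\|^2\Bigr).
\end{equation*}
Here the variance of a random vector is understood as the trace of its covariance, so the scalar comparison with $G_i(\theta)$ makes sense.

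\textbf{Second moment.} We condition on the reward. With probability $p_i$ we have $r=1$ and $(r-p_i)^2=(1-p_i)^2$. With probability $1-p_i$ we have $r=0$ and $(r-p_i)^2=p_i^2$. Writing $G^{(1)}$ and $G^{(0)}$ for the conditional second moments of $\|s\|^2$, this gives
\begin{equation*}
\mathbb{E}\|h_1\|^2=p_i(1-p_i)^2G^{(1)}+(1-p_i)p_i^2G^{(0)}.
\end{equation*}
Under the homogeneity assumption Eq.~\eqref{eq:score_homogeneity}, both conditional moments equal $G_i(\theta)$. Since $(1-p_i)+p_i=1$, the expression collapses to $p_i(1-p_i)G_i(\theta)$.

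\textbf{Mean term.} By the score-function identity, $\mathbb{E}[s]=0$. Therefore $\mathbb{E}h_1=\mathbb{E}[r\,s]-p_i\mathbb{E}[s]=\nabla_\theta\mathbb{E}[r]=\nabla_\theta p_i$, so $\|\mathbb{E}h_1\|^2=\|\nabla_\theta p_i\|^2$. This term is exactly what is neglected under the stated condition $\|\nabla_\theta p_i\|^2\ll p_i(1-p_i)G_i(\theta)$, which yields Eq.~\eqref{eq:grad_var}.

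\textbf{Main obstacle.} The delicate part is being honest about which approximations are used, since the computation itself is exact given them. There are three.
\begin{enumerate}[leftmargin=*]
\item The fixed-baseline approximation replaces the empirical baseline by the true $p_i$. Using the empirical mean instead would change the prefactor to roughly $(n-1)/n$ and introduce cross terms; I would remark on this briefly rather than carry it through.
\item The homogeneity assumption Eq.~\eqref{eq:score_homogeneity} removes the asymmetry between $G^{(1)}$ and $G^{(0)}$. If it fails, the result becomes $p_i(1-p_i)\bigl[(1-p_i)G^{(1)}+p_iG^{(0)}\bigr]$, which is still $\Theta(p_i(1-p_i))$ whenever both conditional moments are bounded above and below.
\item Dropping the mean term $\|\nabla_\theta p_i\|^2$ is justified only under the stated gradient-norm condition.
\end{enumerate}

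\textbf{Remaining claims.} The function $p\mapsto p(1-p)$ is a concave parabola with maximum $1/4$ at $p=1/2$ and zeros at $p\in\{0,1\}$. Consequently, at fixed $n$ and $G_i(\theta)$, $4p_i(1-p_i)=4n\,\mathrm{Var}(g_i)/G_i(\theta)$ is an increasing function of $\mathrm{Var}(g_i)$. Multiplying by the nonnegative factors $D_i^{(0)}$ and $(1+\alpha m_i)$ gives the stated monotone relation for $E_i$ in Eq.~\eqref{eq:energy}; the latter factor is nonnegative whenever $\alpha\le 1$, since $|m_i|\le1$.
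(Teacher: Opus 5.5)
Your proposal is correct and follows essentially the same route as the paper: reduce to a single-rollout variance via i.i.d.\ summands, evaluate $\mathbb{E}\|\hat g_1\|^2\approx p_i(1-p_i)G_i(\theta)$ using homogeneity, identify $\mathbb{E}\hat g_1=\nabla_\theta p_i$, and drop that term under the stated condition. The extra pieces you add are more detailed versions of steps the paper states directly: the explicit conditioning on $r$, the score-function justification of the mean term, and the check that $1+\alpha m_i\ge 0$ for $\alpha\le 1$.
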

\vspace{-8pt}
\begin{proof}
Under the fixed-baseline assumption, $\tilde{A}_k=r_k-p_i$ with $p_i$ fixed. Since rollouts $y_1,\ldots,y_n$ are i.i.d., the per-sample gradient estimates $\hat{g}_k=\tilde{A}_k\,\nabla_\theta\log\pi_\theta(y_k\mid x_i)$ are mutually independent. By the variance decomposition for i.i.d.\ summands,
\begin{equation}
\label{eq:grad_derivation}
\mathrm{Var}(g_i)
=\frac{1}{n}\,\mathrm{Var}(\hat{g}_1)
=\frac{1}{n}\!\left(
  \mathbb{E}\!\left[\|\hat{g}_1\|^2\right]-\|\mathbb{E}[\hat{g}_1]\|^2
\right).
\end{equation}
For the first term, using the homogeneity approximation in Eq.~\eqref{eq:score_homogeneity},
\begin{equation}
\label{eq:hatg_sq}
\mathbb{E}[\|\hat{g}_1\|^2]
=\mathbb{E}[\tilde{A}_1^2\|\nabla_\theta\log\pi_\theta(y_1|x_i)\|^2]
\approx \mathbb{E}[\tilde{A}_1^2]\,G_i(\theta)
=p_i(1-p_i)\,G_i(\theta).
\end{equation}
For the second term $\mathbb{E}[\hat{g}_1]=\nabla_\theta p_i(\theta)$, so
$\|\mathbb{E}[\hat{g}_1]\|^2=\|\nabla_\theta p_i\|^2$.
Substituting Eq.~\eqref{eq:hatg_sq} and the identity $\|\mathbb{E}[\hat{g}_1]\|^2=\|\nabla_\theta p_i\|^2$ into Eq.~\eqref{eq:grad_derivation} gives Eq.~\eqref{eq:grad_var_expansion}:
\begin{equation}
\label{eq:grad_var_expansion}
\mathrm{Var}(g_i)
=\frac{p_i(1-p_i)\,G_i(\theta)-\|\nabla_\theta p_i(\theta)\|^2}{n}
\;\approx\;
\frac{p_i(1-p_i)}{n}\,G_i(\theta),
\end{equation}
where the last step drops the subdominant term under the stated condition. Hence the Bernoulli variance $p_i(1-p_i)$ governs the dominant component of the prompt-level gradient variance, and the normalization $4p(1-p)\in[0,1]$ is a monotone rescaling with maximum at $p=0.5$.
\end{proof}
\vspace{-8pt}
\noindent\textbf{Momentum as a temporal change signal.}
The momentum $m_i^{(t)}=p_i^{(t)}-\mu_i^{(t-1)}$ measures how much the current pass rate deviates from recent history.
To see why large $|m_i|$ signals high gradient magnitude, consider the first-order Taylor expansion of $\ell_i(\theta)=p_i(\theta)$ around
$\theta_{t-1}$,
\begin{equation}
\label{eq:taylor_momentum}
\Delta p_i^{(t)} := p_i^{(t)}-p_i^{(t-1)}
\approx \bigl(\nabla_\theta \ell_i(\theta_{t-1})\bigr)^\top\!\Delta\theta,
\end{equation}
where $\Delta\theta=\theta_t-\theta_{t-1}$. Eq.~\eqref{eq:taylor_momentum}, combined with Exponential Moving Average (EMA) smoothing, implies $|m_i^{(t)}|\lesssim\|\nabla_\theta p_i\|\,\|\Delta\theta\|$ by Cauchy--Schwarz, so the momentum factor amplifies prompts where the policy is changing rapidly.

\begin{proposition}[EMA--convolution duality]
\label{prop:ema_conv}
With $\mu_i^{(0)}=p_i^{(0)}$, the EMA $\mu_i^{(t)}=\lambda\mu_i^{(t-1)}+(1-\lambda)p_i^{(t)}$
equals the causal convolution $(h*p_i)(t)$ with kernel $h(\tau)=(1-\lambda)\lambda^\tau\mathbf{1}_{\tau\geq0}$. Consequently, $m_i^{(t)}=p_i^{(t)}-\mu_i^{(t-1)}$ is the output of a causal \emph{high-pass filter} applied to $\{p_i^{(s)}\}_{s\leq t}$.
\end{proposition}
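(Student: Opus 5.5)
The plan is to unroll the recursion by induction on $t$ and then read off the transfer function of the resulting linear time-invariant map. First I fix the convention at $t=0$. The initialization $\mu_i^{(0)}=p_i^{(0)}$ has to be reconciled with the kernel, and the cleanest reading extends the sequence to negative times by $p_i^{(s)}=p_i^{(0)}$ for $s<0$, i.e.\ the system starts at rest at level $p_i^{(0)}$. Under this convention, $\sum_{\tau\ge t}(1-\lambda)\lambda^\tau p_i^{(0)}=\lambda^t p_i^{(0)}$. The claim to prove is then
\begin{equation*}
\mu_i^{(t)}=\sum_{\tau=0}^{t-1}(1-\lambda)\lambda^\tau p_i^{(t-\tau)}+\lambda^t p_i^{(0)}=\sum_{\tau\ge0}h(\tau)\,p_i^{(t-\tau)}=(h*p_i)(t).
\end{equation*}
The case $t=0$ is immediate, since $\mu_i^{(0)}=p_i^{(0)}$. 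For the inductive step I substitute the hypothesis into $\lambda\mu_i^{(t-1)}+(1-\lambda)p_i^{(t)}$. The factor $\lambda$ shifts every index $\tau\mapsto\tau+1$, and the new term $(1-\lambda)p_i^{(t)}$ fills the $\tau=0$ slot, which closes the induction. As a sanity check, $\sum_\tau h(\tau)=1$, so $h$ has unit DC gain.

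Next I express $m_i^{(t)}$ as a convolution. Write $m_i^{(t)}=p_i^{(t)}-\mu_i^{(t-1)}=(g*p_i)(t)$ with kernel $g=\delta-h(\cdot-1)$. Explicitly, $g(0)=1$ and $g(\tau)=-(1-\lambda)\lambda^{\tau-1}$ for $\tau\ge1$. The kernel $g$ is supported on $\tau\ge0$, so the filter is causal. In the $z$-domain,
\begin{equation*}
H(z)=\frac{1-\lambda}{1-\lambda z^{-1}},\qquad G(z)=1-z^{-1}H(z)=\frac{1-z^{-1}}{1-\lambda z^{-1}}.
\end{equation*}
This $G$ is a first-order system with a zero at $z=1$ and a pole at $z=\lambda$ inside the unit circle, so it is stable.

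For the high-pass property I evaluate on the unit circle $z=e^{j\omega}$. At $\omega=0$ the gain is $|G(e^{j0})|=0$, so constant sequences are annihilated, which is the same as $\sum_\tau g(\tau)=0$. At $\omega=\pi$ the gain is $|G(e^{j\pi})|=2/(1+\lambda)$, the maximum. In between I would show $|G(e^{j\omega})|^2=2(1-\cos\omega)/(1-2\lambda\cos\omega+\lambda^2)$ and check that it is increasing on $[0,\pi]$. The derivative in $c=\cos\omega$ has sign $-(1-\lambda)^2<0$, so the gain rises as $c$ falls. This monotone gain from $0$ up to $2/(1+\lambda)$ is what justifies calling $g$ a high-pass filter.

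The main obstacle is not any of the computations but the boundary convention. The proposition's kernel only literally reproduces the recursion once pre-sample values are defined, and the term $\lambda^t p_i^{(0)}$ must be accounted for. I expect to state the constant-extension convention explicitly. An alternative would be to absorb the tail into a transient that decays geometrically like $\lambda^t$ and so does not affect the frequency-domain characterization.
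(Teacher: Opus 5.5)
Your proof is correct. It follows the paper's route (unroll the recurrence, then read off the filter), but it is more careful at the two points where the paper is loose.

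\textbf{Initial condition.} The paper unrolls to $\mu_i^{(t)}=\sum_{\tau=0}^{t}(1-\lambda)\lambda^\tau p_i^{(t-\tau)}$. That formula is not consistent with $\mu_i^{(0)}=p_i^{(0)}$: at $t=0$ it gives $(1-\lambda)p_i^{(0)}$. In general it undershoots the true EMA by $\lambda^{t+1}p_i^{(0)}$, since it corresponds to the initialization $\mu_i^{(-1)}=0$. Your constant pre-extension $p_i^{(s)}=p_i^{(0)}$ for $s<0$ makes the stated kernel reproduce the recursion exactly. The tail $\sum_{\tau\ge t}(1-\lambda)\lambda^\tau p_i^{(0)}=\lambda^t p_i^{(0)}$ supplies precisely the missing mass. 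Your fallback, treating the discrepancy as a transient decaying like $\lambda^t$, is the other honest reading of the paper's identity.

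\textbf{High-pass claim.} The paper only asserts that ``signal minus a one-step-delayed smoothed version'' is the complementary high-pass filter. You exhibit $g=\delta-h(\cdot-1)$ and $G(z)=(1-z^{-1})/(1-\lambda z^{-1})$, show the zero at DC, and show that $|G(e^{j\omega})|$ increases monotonically from $0$ to $2/(1+\lambda)$. The derivative in $\cos\omega$ is indeed $-2(1-\lambda)^2$ over a positive square. Your $G$ also makes the paper's word ``complementary'' precise. Since $1-H(z)=\lambda(1-z^{-1})/(1-\lambda z^{-1})=\lambda G(z)$, one has $m_i^{(t)}=\bigl(p_i^{(t)}-\mu_i^{(t)}\bigr)/\lambda$. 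This also follows directly from the recursion. The delayed difference is thus exactly the complement of the EMA low-pass, up to the factor $1/\lambda$.

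The paper's version buys brevity. Yours buys an identity that is exact under an explicit convention, and an actual verification of the high-pass label rather than an appeal to intuition.
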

\vspace{-8pt}
\begin{proof}
Unrolling the recurrence gives $\mu_i^{(t)}=\sum_{\tau=0}^{t}(1-\lambda)\lambda^\tau p_i^{(t-\tau)}=(h*p_i)(t)$. Since $m_i^{(t)}=p_i^{(t)}-\mu_i^{(t-1)}$, the momentum equals the original signal minus a one-step-delayed smoothed version of itself, which is the output of the
complementary high-pass filter isolating rapid temporal changes.
\end{proof}

\paragraph{Interpretation via attention.}
\label{rem:attention}
The Energy Score can be read, by analogy, as a sample-level attention weight: $D_i^{(0)}$ acts as a key encoding intrinsic hardness, $\mu_i^{(t)}$ as a query representing current model state, and $4p(1-p)(1+\alpha m)$ as the normalized score, with Top-$K$ implementing hard attention. Compared to Focal Loss~\cite{lin2017focal} (asymmetric in $p$, static) and OHEM~\cite{shrivastava2016training} (loss-based, memoryless), $E_i$ is \emph{symmetric} in $p$, \emph{temporally aware} via EMA, and \emph{gradient-aligned} by Theorem~\ref{thm:energy_gradient}.

\begin{algorithm}[t]
\caption{Learning-Zone Energy Training }
\label{alg:lze}
\begin{algorithmic}[1]
\Require Dataset $\mathcal{D}$, policy $\pi_{\theta_0}$, selection ratio $\kappa$,
         EMA decay $\lambda$, momentum weight $\alpha$,
         prune threshold $T_{\mathrm{prune}}$, replay ratio $\rho$
\State \textbf{Initialise:}
       Run a single forward pass over $\mathcal{D}$; for each $i$, compute
       $p_i^{(0)}$, set $D_i^{(0)}\leftarrow1-p_i^{(0)}$,
       $\mu_i^{(0)}\leftarrow p_i^{(0)}$,
       epoch-correct counter $c_i\leftarrow0$
\State Active pool $\mathcal{A}\leftarrow\mathcal{D}$;\enspace
       Prune pool $\mathcal{P}\leftarrow\emptyset$
\For{each epoch $e=1,2,\ldots$}
  \State \textbf{Replay:}
         Sample $\lfloor\rho|\mathcal{P}|\rfloor$ prompts from $\mathcal{P}$;
         run rollouts; move any with $p_i<1$ back to $\mathcal{A}$
  \For{each training step $t$ within epoch $e$}
    \State Collect rollouts for all $i\in\mathcal{A}$;
           compute $p_i^{(t)}$
    \State $\mu_i^{(t)}\leftarrow\lambda\,\mu_i^{(t-1)}+(1-\lambda)\,p_i^{(t)}$
           \hfill\Comment{EMA update}
    \State $m_i^{(t)}\leftarrow p_i^{(t)}-\mu_i^{(t-1)}$
           \hfill\Comment{Momentum}
    \State $E_i^{(t)}\leftarrow
           D_i^{(0)}\cdot4p_i^{(t)}(1-p_i^{(t)})\cdot(1+\alpha\,m_i^{(t)})$
           \hfill\Comment{Energy Score}
    \State $\mathcal{S}_t\leftarrow
           \mathrm{Gumbel\text{-}Top\text{-}K}\!\bigl(\{E_i^{(t)}\}_{i\in\mathcal{A}},
           \lfloor\kappa|\mathcal{A}|\rfloor\bigr)$
           \hfill\Comment{Backward selection}
    \State $\theta_{t+1}\leftarrow\mathrm{GradStep}(\pi_{\theta_t},\mathcal{S}_t)$
           \hfill\Comment{Policy update on selected subset}
  \EndFor
  \State \textbf{Forward pruning:}
         For each $i\in\mathcal{A}$, increment $c_i$ if $p_i^{(t_{\rm last})}=1$,
         else reset $c_i\leftarrow0$;
         if $c_i\geq T_{\mathrm{prune}}$, move $i$ from $\mathcal{A}$ to $\mathcal{P}$
\EndFor
\end{algorithmic}
\end{algorithm}

\subsection{Prompt Selection and Training Algorithm}
\label{sec:algorithm}
\textbf{Backward selection.}
At each training step~$t$, \textsc{LZE} first collects rollouts for all prompts in the active pool, computes pass rates and Energy Scores via Eq.~\eqref{eq:energy}, and then retains the top-$\kappa$ fraction for policy optimization. To balance exploitation with exploration, small Gumbel perturbations are added before ranking: $\tilde{E}_i=E_i^{(t)}+g_i$, $g_i\sim\mathrm{Gumbel}(0,1)$. The top-$K=\lfloor\kappa|\mathcal{A}|\rfloor$ prompts by perturbed score form $\mathcal{S}_t$; only these groups participate in policy gradient updates.
Backward selection mainly reduces \emph{optimization cost}, and rollouts are still collected over the full active pool to update Energy Scores.

\textbf{Forward pruning with replay.}
At the epoch level, if prompt~$i$ achieves full correctness ($p_i=1$) for
$T_{\mathrm{prune}}=2$ consecutive epochs, it is moved to a prune pool~$\mathcal{P}$ and skipped in future rollout generation. At the start of each epoch, a fraction~$\rho$ of $\mathcal{P}$ is re-evaluated, and any prompt that no longer achieves full correctness is restored to the active pool. The two stages are complementary: the backward selector concentrates the \emph{gradient} budget on the most informative groups, while the forward pruner eliminates \emph{rollout} cost for persistently solved prompts. The complete training procedure is presented in Algorithm~\ref{alg:lze}.

\section{Experiments}
\label{sec:experiments}
We evaluate \textsc{LZE} on mathematical reasoning benchmarks across multiple model scales, comparing against representative data selection methods spanning offline filtering, online and SFT-based rejection. We further analyze training efficiency and convergence dynamics and conduct ablation studies to verify the contribution of each Energy Score component and the sensitivity to key hyperparameters.

\textbf{Models and Datasets.}
We evaluate our method on mathematical reasoning, where models must produce step-by-step Chain-of-Thought (CoT) reasoning together with a boxed final answer.
To ensure broad coverage, we experiment with models from the Qwen2.5~\citep{qwen2024qwen25} and Qwen3~\citep{qwen2025qwen3} families (the latter with thinking mode enabled throughout training), and the full list of evaluated variants is given in Table~\ref{tab:pass1_all}.
Ablation studies are primarily conducted with Qwen2.5-Math-1.5B and -7B~\citep{qwen2024math}, chosen for their strong performance ceiling and stable inference behaviour, making them reliable starting points for post-training.
We train on three datasets of increasing difficulty: GSM8K~\citep{cobbe2021training}, MATH~\citep{hendrycks2021math}, and DAPO-MATH (English split)~\citep{yu2025dapo}. More details are provided in Appendix~\ref{sec:model_data_details}.

\textbf{Hyper-parameters.}
\label{par:hyperparams}
We follow the default configurations of the Verl framework~\citep{sheng2024hybridflow} for PPO and GRPO training.
Following recent practice in group-based RL post-training~\citep{yu2025dapo,he2025justrl}, we remove the KL-divergence penalty and use temperature $=1.0$, top-$p=1.0$, and top-$k=-1$ to encourage diverse rollout generation.
All runs use AdamW with learning rate $1\times10^{-6}$.
The prompt batch size is 1024 for models under 4B parameters and 512 for larger
models, with $n=8$ rollouts per prompt.
Maximum generation length is 4096 tokens for Qwen2.5-family models and 16384 tokens for Qwen3-family models.
The selection ratio $k{=}0.4$ and momentum coefficient $\alpha=0.3$ are fixed
across all main experiments, and sensitivity analyses are in Section~\ref{sec:ablation}.

\textbf{Evaluation.}
\label{par:evaluation}
We use a standardized chat template (Appendix~\ref{sec:chat_template}) and determine correctness via rule-based answer matching using \texttt{math-verify}.
We report Pass@1 accuracy on both in-domain and out-of-distribution (OOD)
benchmarks, together with estimated FLOPs budgets, wall-clock training time, and convergence speed.

\begin{table*}[t]
\centering
\small
\setlength{\extrarowheight}{3pt}
\setlength{\tabcolsep}{5pt}
\renewcommand{\arraystretch}{1.2}
\vspace{-10pt}
\caption{Main Performance \& Efficiency. Pass@1 comparison on in-domain and OOD evaluation sets, alongside estimated computational savings.}
\label{tab:pass1_all}
\resizebox{\textwidth}{!}{
\begin{tabular}{c c l c c c c c c c}
\toprule
\multirow{2}{*}{\textbf{Training Set}} & \multirow{2}{*}{\textbf{Model}} & \multirow{2}{*}{} & \multicolumn{6}{c}{\textbf{Accuracy (Pass@1)}} & \multirow{2}{*}{\textbf{FLOPs Budget}} \\
\cmidrule(lr){4-9}
 & & & \textbf{GSM8K} & \textbf{MATH} & \textbf{DAPO-MATH} & \textbf{AMC23} & \textbf{AIME25} & \textbf{Avg} & \\
\midrule
\multirow{4}{*}{GSM8K}
& \multirow{2}{*}{Qwen2.5-Math-1.5B}
  & Base & 0.859 & 0.538 & 0.249 & 0.575 & 0.111 & 0.466 & $5.47\times10^{19}$ \\
& & Ours & \best{0.860} & \best{0.553}\up{+2.8\%} & \best{0.265}\up{+6.4\%} & \best{0.600}\up{+4.3\%} & \best{0.122}\up{+9.9\%} & \best{0.480}\up{+2.9\%} & $3.50\times10^{19}$ \\
\cmidrule(lr){2-10}
& \multirow{2}{*}{Qwen3-1.7B}
  & Base & 0.904 & 0.573 & 0.450 & 0.600 & 0.211 & 0.548 & $4.94\times10^{19}$ \\
& & Ours & \best{0.912} & \best{0.578} & \best{0.452} & \best{0.700}\up{+16.7\%} & \best{0.267}\up{+26.5\%} & \best{0.582}\up{+6.2\%} & $3.17\times10^{19}$ \\
\midrule
\multirow{6}{*}{MATH}
& \multirow{2}{*}{Qwen2.5-Math-1.5B}
  & Base & 0.807 & \best{0.583} & 0.281 & 0.550 & 0.122 & 0.469 & $5.82\times10^{19}$ \\
& & Ours & \best{0.825}\up{+2.2\%} & \best{0.583} & \best{0.310}\up{+10.3\%} & \best{0.650}\up{+18.2\%} & \best{0.178}\up{+45.9\%} & \best{0.509}\up{+8.6\%} & $3.73\times10^{19}$ \\
\cmidrule(lr){2-10}
& \multirow{2}{*}{Qwen2.5-Math-7B}
  & Base & 0.868 & 0.621 & 0.440 & 0.700 & 0.278 & 0.581 & $1.70\times10^{20}$ \\
& & Ours & 0.876 & \best{0.628} & \best{0.462}\up{+3.1\%} & \best{0.725}\up{+3.6\%} & \best{0.311}\up{+11.9\%} & \best{0.600}\up{+3.2\%} & $1.09\times10^{20}$ \\
\cmidrule(lr){2-10}
& \multirow{2}{*}{Qwen3-4B}
  & Base & \best{0.945} & 0.771 & 0.576\downnote & 0.875 & 0.544 & 0.742 & $1.18\times10^{20}$ \\
& & Ours & 0.942 & \best{0.775} & 0.576\downnote & \best{0.900}\up{+2.9\%} & \best{0.556}\up{+2.2\%} & \best{0.750}\up{+1.0\%} & $7.55\times10^{19}$ \\
\midrule
\multirow{6}{*}{DAPO-MATH}
& \multirow{2}{*}{Qwen2.5-Math-7B}
  & Base & \best{0.928} & 0.699 & 0.530 & 0.775 & 0.344 & 0.655 & $2.94\times10^{20}$ \\
& & Ours & \best{0.928} & \best{0.708}\up{+1.3\%} & \best{0.546}\up{+3.0\%} & \best{0.825}\up{+6.5\%} & \best{0.356}\up{+3.5\%} & \best{0.673}\up{+2.7\%} & $1.89\times10^{20}$ \\
\cmidrule(lr){2-10}
& \multirow{2}{*}{Qwen3-4B}
  & Base & 0.914 & 0.630 & \best{0.807} & \best{0.925} & 0.544 & 0.764 & $8.56\times10^{19}$ \\
& & Ours & \best{0.941}\up{+3.0\%} & \best{0.639}\up{+1.4\%} & 0.795 & 0.900 & \best{0.556}\up{+2.2\%} & \best{0.766}\up{+0.3\%} & $5.48\times10^{19}$ \\
\cmidrule(lr){2-10}
& \multirow{2}{*}{Qwen3-8B}
  & Base & 0.906 & 0.626 & 0.768 & 0.825 & 0.456 & 0.716 & $9.85\times10^{19}$ \\
& & Ours & \best{0.911} & \best{0.630} & \best{0.807}\up{+5.1\%} & \best{0.925}\up{+12.1\%} & \best{0.500}\up{+9.6\%} & \best{0.755}\up{+5.4\%} & $6.30\times10^{19}$ \\
\bottomrule
\end{tabular}
}
\vspace{-10pt}
\end{table*}

\subsection{Main Results}
\textbf{Accuracy and out-of-distribution generalization.}
Table~\ref{tab:pass1_all} summarizes the Pass@1 accuracy across both in-domain and out-of-distribution (OOD) mathematical benchmarks. We report \textit{Base} as the strong baseline without data filtering. Our method consistently outperforms the full-data baseline in nearly all configurations while training on only 40\% of prompts per step, confirming that concentrating the gradient budget on the learning frontier is more effective than uniform allocation. The gains are most pronounced on the challenging OOD benchmarks AMC\,23 and AIME\,25, which measure transferable mathematical reasoning rather than in-distribution recall. Representative gains include: Qwen2.5-Math-1.5B on MATH achieving $+45.9\%$
on AIME\,25 and $+18.2\%$ on AMC\,23; Qwen3-1.7B on GSM8K achieving $+26.5\%$
on AIME\,25 and $+16.7\%$ on AMC\,23; and Qwen3-8B on DAPO-MATH achieving
$+12.1\%$ on AMC\,23 and $+9.6\%$ on AIME\,25.

These OOD improvements are consistent with the theoretical prediction of
Theorem~\ref{thm:energy_gradient}: by suppressing both trivially solved and
completely failed prompt groups, \textsc{LZE} prevents the policy from
overfitting to the in-distribution difficulty profile, driving capability
gains that transfer more broadly. The gains are particularly pronounced for smaller models (1.5B and 1.7B), where the learning zone is narrower and the Energy Score's discriminative power is highest. Furthermore, consistent improvements across all scales from 1.5B to 8B and across both the Qwen2.5 and Qwen3 families confirm that learning-zone targeting is broadly applicable and not tied to any particular architecture or capacity regime.

\begin{wrapfigure}{r}{0.38\textwidth}
\vspace{-6pt}
\centering
\includegraphics[width=\linewidth]{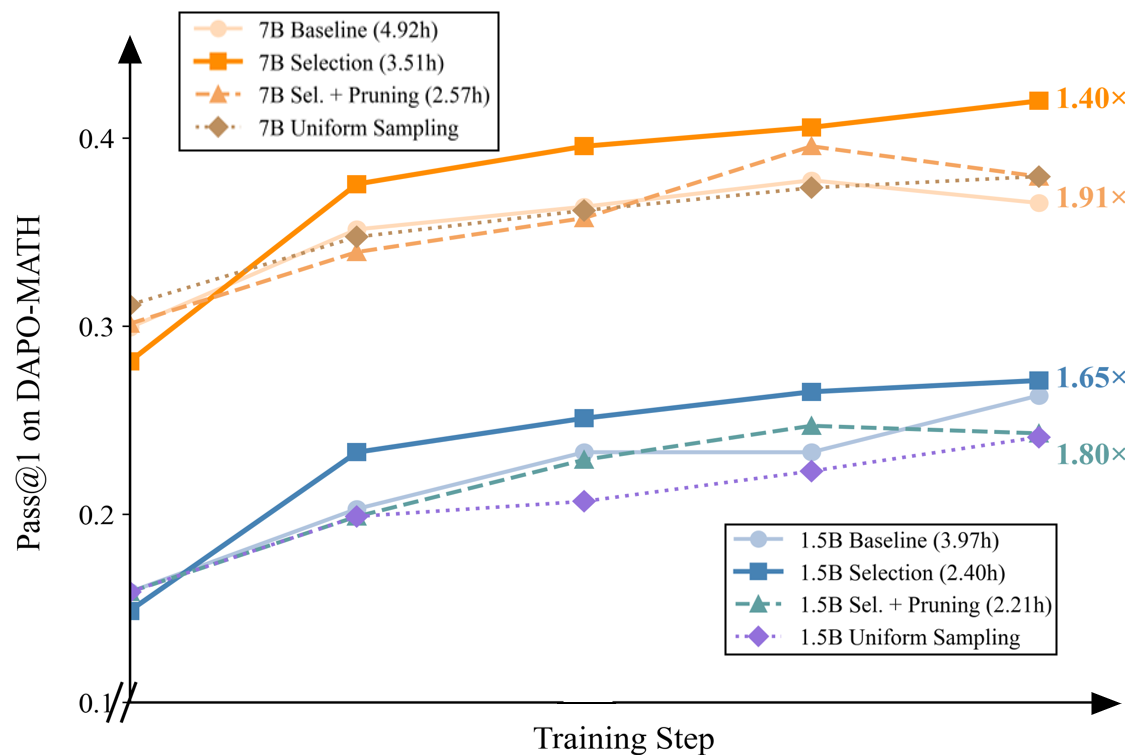}
\vspace{-10pt}
\caption{\small Pass@1 vs.\ wall-clock time for Qwen2.5-Math models. The $\times$ annotations denote speedup multipliers relative to the full-data baseline.}
\label{fig:learning_dynamics}
\vspace{-10pt}
\end{wrapfigure}
\textbf{Efficiency and training progress.}
The proposed data selection method reduces theoretical training FLOPs by approximately 36\% by restricting backpropagation to the top-$40\%$ fraction of prompt groups, while maintaining full rollout coverage to recompute Energy Scores at each step (see Appendix~\ref{sec:flops_derivation} for the derivation). Figure~\ref{fig:learning_dynamics} plots Pass@1 against wall-clock training time.
For Qwen2.5-Math-1.5B, the baseline stabilizes at 3.97 hours; \textsc{LZE} reaches better performance in 2.40 hours ($1.65\times$ faster). For the 7B model, convergence is reached in 3.51 hours versus 4.92 hours for the baseline ($1.40\times$ faster). Adding the forward pruner yields further speedups of $1.80\times$ and $1.91\times$ for the 1.5B and 7B models, respectively, at a modest accuracy trade-off relative to the backward-only variant. Notably, uniform sampling at the same 40\% retention ratio consistently underperforms the full-data baseline and even falls below the forward pruner in final accuracy, confirming that the gains of \textsc{LZE} stem from the principled identification of learning-zone prompts rather than from any implicit regularization effect of reduced data volume.

\begin{table*}[t]
\centering
\small
\setlength{\extrarowheight}{3pt}
\setlength{\tabcolsep}{5pt}
\renewcommand{\arraystretch}{1.2}
\vspace{-8pt}
\caption{Comparison of Pass@1 performance between the proposed method and existing data filtering baselines on the MATH dataset. All reported metrics represent the highest Pass@1 score achieved across the entire training trajectory. ``rem.'' stands for remove.}
\label{tab:comparison_methods}
\resizebox{\textwidth}{!}{
\begin{tabular}{l c c c c c c c c}
\toprule
\multirow{2}{*}{\textbf{Method}} & \multicolumn{4}{c}{\textbf{Qwen2.5-Math-1.5B}} & \multicolumn{4}{c}{\textbf{Qwen2.5-Math-7B}} \\
\cmidrule(lr){2-5} \cmidrule(lr){6-9}
 & \textbf{DAPO-MATH} & \textbf{AMC23} & \textbf{AIME25} & \textbf{Avg Score} & \textbf{DAPO-MATH} & \textbf{AMC23} & \textbf{AIME25} & \textbf{Avg Score} \\
\midrule
Baseline                                 & 0.281 & 0.575 & 0.122 & 0.326    & 0.448 & 0.700 & 0.278 & 0.475 \\
Ours                                     & 0.310 & \best{0.650} & \best{0.178} & \best{0.379}   & \best{0.462} & \best{0.725} & \best{0.311} & \best{0.500} \\
Reinforce-Rej (with online filtering)    & 0.235 & 0.525 & 0.122 & 0.294    & 0.351 & \best{0.725} & 0.200 & 0.425 \\
Pre-Filter (rem.\ all wrong)             & \best{0.327} & 0.575 & 0.122 & 0.341    & 0.447 & 0.675 & 0.244 & 0.465 \\
Pre-Filter (rem.\ all right)             & 0.305 & 0.575 & 0.167 & 0.349    & 0.458 & 0.675 & 0.279 & 0.471 \\
Pre-Filter (discard all)                 & 0.303 & 0.600 & 0.156 & 0.353    & 0.428 & \best{0.725} & 0.244 & 0.466 \\
RAFT                                     & 0.281 & 0.575 & 0.133 & 0.330    & 0.363 & 0.575 & 0.189 & 0.348 \\
Uniform sampling                         & 0.273 & 0.550 & 0.122 & 0.315    & 0.437 & 0.675 & 0.289 & 0.467 \\
\bottomrule
\end{tabular}
}
\vspace{-8pt}
\end{table*}

\textbf{Comparison with existing methods.} Table~\ref{tab:comparison_methods} compares \textsc{LZE} against offline, online, and SFT-based baselines on the MATH dataset, and empirical results suggest:
\begin{itemize}[leftmargin=*]
  \item \emph{Offline filtering is ceiling-limited by an off-policy data distribution.}
        All Pre-Filter variants improve marginally over the baseline, but their gains plateau as training progresses. Static filters commit to a snapshot of model capability at initialization, but as the policy evolves, the curated distribution becomes increasingly off-policy and can no longer track the actual learning frontier. \textsc{LZE} recomputes its Energy Score at every training step, keeping the retained distribution continuously on-policy.

  \item \emph{Binary online rejection destabilizes training.}
        Reinforce-Rej~\cite{wei2025minimalist} permanently removes extreme-probability groups at first detection, disrupting the continuity of the training distribution and inducing optimization instability, as evidenced by its substantially lower OOD performance.
        \textsc{LZE} replaces hard removal with a continuous Energy Score for soft
        re-weighting, maintaining a stable training distribution without sacrificing adaptivity.
  \item \emph{SFT-based filtering cannot substitute for RL exploration.}
        RAFT~\cite{dong2023raft} retains only positively rewarded rollouts for supervised training and underperforms the full RL baseline on all benchmarks. This confirms that policy-gradient-driven exploration is essential for generalization: the diversity of the gradient signal cannot be recovered by SFT alone.
\end{itemize}

\begin{figure}[H]
  \centering
  \begin{subfigure}[t]{0.48\linewidth}
    \centering
    \includegraphics[width=0.97\linewidth]{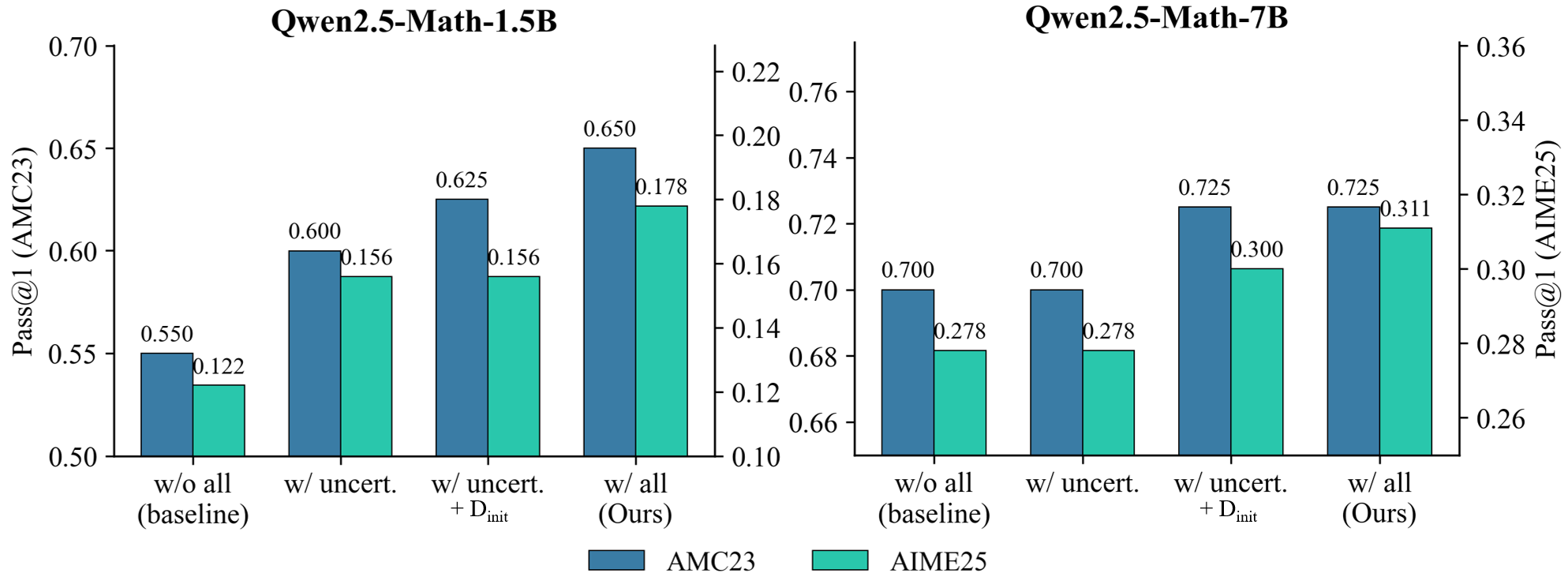}
    \caption{Component ablation.}
    \label{fig:ablation_components}
  \end{subfigure}
  \hspace{0.\linewidth}
  \begin{subfigure}[t]{0.23\linewidth}
    \centering
    \includegraphics[width=0.97\linewidth]{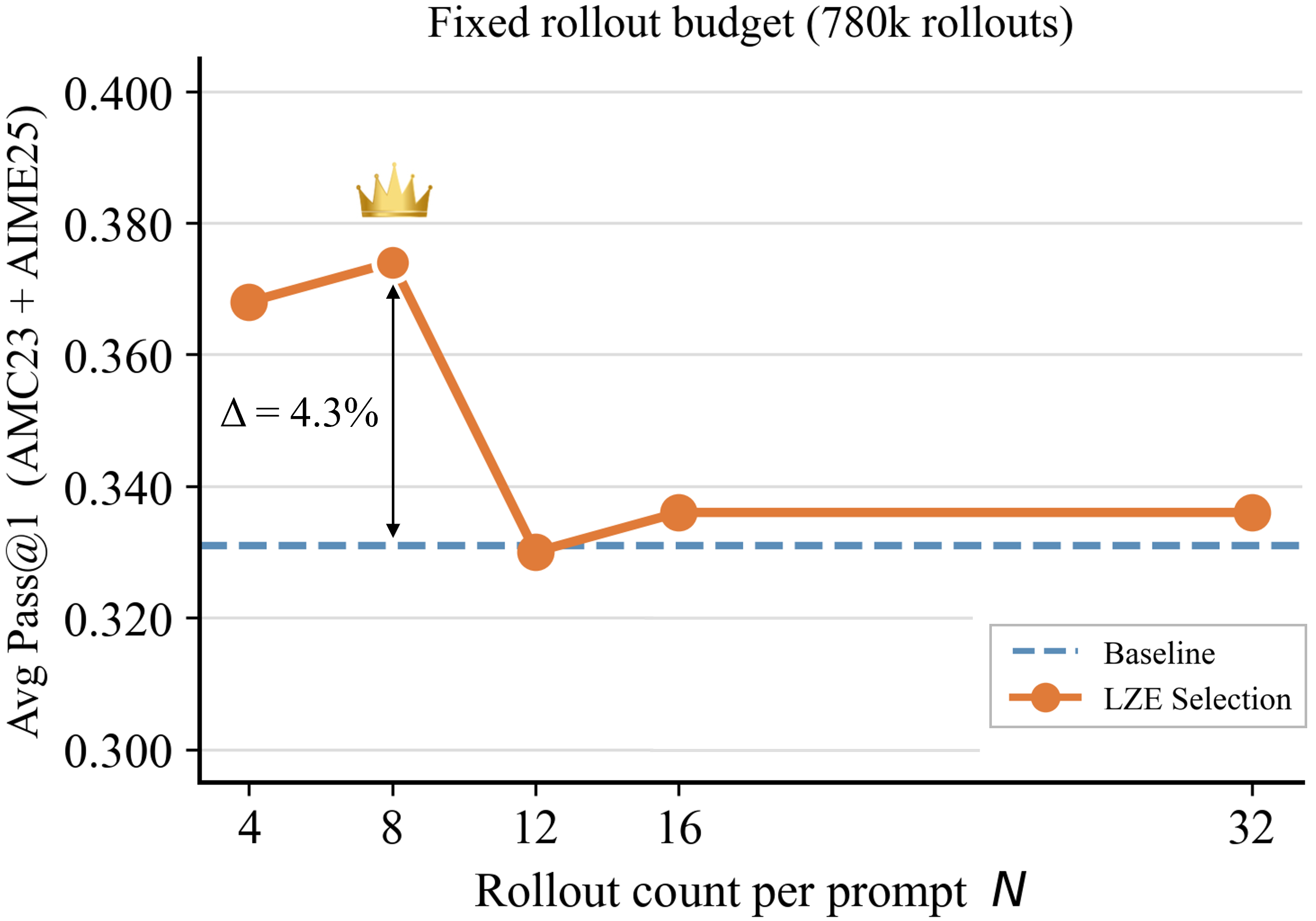}
    \caption{Rollout sensitivity.}
    \label{fig:ablation_rollout_n}
  \end{subfigure}
  \hspace{0.\linewidth}
  \begin{subfigure}[t]{0.26\linewidth}
    \centering
    \includegraphics[width=0.8\linewidth]{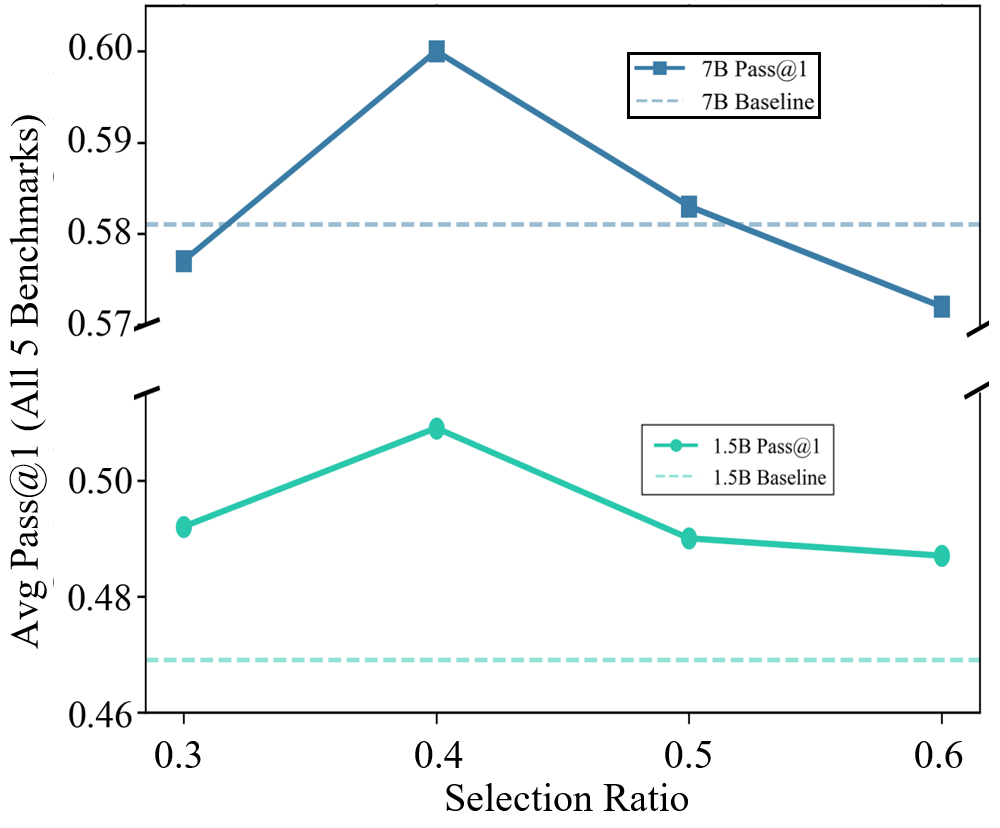}
    \caption{Selection-ratio sensitivity.}
    \label{fig:ablation_ratio}
  \end{subfigure}
  \caption{Ablation studies on Energy Score components, rollout count $n$, and selection ratio $\kappa$, conducted on Qwen2.5-Math models trained on MATH.}
  \label{fig:ablation_full}
\vspace{-8pt}
\end{figure}

\subsection{Ablation Study}
\label{sec:ablation}
\textbf{Contribution of each Energy Score component.}
We ablate each factor in Eq.~\eqref{eq:energy} by removing it in turn and measuring the resulting drop in Pass@1 on AMC23 and AIME25 using Qwen2.5-Math-1.5B and Qwen2.5-Math-7B. As shown in Figure~\ref{fig:ablation_components}, removing the uncertainty term $4p(1-p)$ causes the largest degradation, directly validating the learning-zone hypothesis of Theorem~\ref{thm:energy_gradient}: gradient informativeness is maximized at $p\approx0.5$ and the uncertainty term is the primary driver of the improvement. Removing the difficulty anchor $D_{\mathrm{init}}$ allows the selector to collapse onto initially easy prompts that sit transiently near $p=0.5$ due to policy noise, reducing the quality of the retained set. Besides, removing the momentum term causes the selector to treat stagnating and actively improving prompts equivalently, weakening OOD generalization.

\textbf{Sensitivity to rollout $n$.}
Under a fixed total rollout budget of $780\text{k}$ (corresponding to $n{=}8$ over approximately 7 epochs), we vary the per-prompt rollout $n\in\{4,8,12,16,32\}$ for both the baseline and \textsc{LZE}, holding all
other hyperparameters constant. As shown in Figure~\ref{fig:ablation_rollout_n}, \textsc{LZE} consistently outperforms the baseline across all values of $n$, with peak absolute performance at $n{=}8$.
Smaller $n$ reduces the number of samples used to estimate the group-level pass rate, making the Energy Score noisy and less discriminative. Larger $n$ improves per-prompt accuracy but reduces the number of epochs within the fixed budget, limiting how many times the score can adapt to policy changes. The default $n{=}8$ strikes the best balance between estimate quality and adaptation frequency, and the performance gap between \textsc{LZE} and the baseline is most pronounced at this setting.
Unconstrained upper-bound results are provided in Appendix~\ref{sec:appendix_ablation}, which confirm that $n{=}8$ remains a robust default across both efficiency-constrained and quality-maximizing regimes.

\textbf{Sensitivity to selection ratio $k$.}
Figure~\ref{fig:ablation_ratio} reports the averaged performance under varying $\kappa$ across both the 1.5B and 7B models. and Pass@1 peaks at $\kappa=0.4$. At lower ratios, the policy gradient becomes sample-starved and updates grow brittle. At higher ratios, prompts from the all-correct and all-incorrect regimes re-enter the training batch, diluting the gradient signal with uninformative updates, consistent with the prediction of Theorem~\ref{thm:energy_gradient} that $\mathrm{Var}(g_i)$ vanishes at both extremes.

\vspace{-5pt}
\section{Related Work}
\vspace{-5pt}
\paragraph{Reinforcement Learning for Mathematical Reasoning.}
Building on chain-of-thought reasoning~\cite{wei2022chain}, self-consistency decoding~\cite{wang2023selfconsistency}, and tree search~\cite{yao2023tree,zelikman2022star}, reinforcement learning has become the dominant post-training paradigm for LLM reasoning. PPO~\cite{schulman2017proximal} served as the canonical optimizer in early RLHF work~\cite{ouyang2022training}, but REINFORCE-style variants have since gained traction for lower complexity and favourable scaling~\cite{williams1992simple,ahmadian2024back,kool2019buy}. GRPO~\cite{deepseek2024deepseekmath,deepseek2025deepseekr1} normalizes rewards within each prompt group to estimate advantages without a critic, and DAPO~\cite{yu2025dapo} improves stability through dynamic sampling and entropy-preserving clipping. Recent analysis suggests that the reasoning capacity of RL with verifiable rewards remains bounded by the base model's distribution~\cite{liu2025understanding}, and that data selection can matter as much as the optimizer~\cite{wei2025minimalist,mao2026dynamicspredictive}.

\textbf{Curriculum Learning and Data Selection.}
Curriculum learning~\cite{bengio2009curriculum} and self-paced learning~\cite{kumar2010self} established that organizing examples by difficulty improves optimization. In LLM post-training, data selection strategies prioritizing instances by quality, difficulty, or diversity have been shown to outperform naive quantity scaling~\cite{sorscher2022beyond,liu2024what}. Difficulty-aware approaches use proxies such as instruction complexity~\cite{lu2024instag}, model uncertainty~\cite{kuhn2023semantic}, focal weighting~\cite{lin2017focal}, and online hard-example mining~\cite{shrivastava2016training} to concentrate the gradient budget on informative samples. Diversity-oriented methods complement these via semantic deduplication, quality-driven subset selection, and gradient-based influence estimation~\cite{abbas2023semdedup,zhou2023lima,chen2024alpagasus,xia2024less,xie2023data}. Token-level selection further refines this by allocating training signal only to informative tokens within each sequence~\cite{lin2024rho1}. Most of these approaches operate offline and cannot adapt to the evolving policy landscape of online RL.

\textbf{Data Filtering in RL Post-Training.}
A growing body of work investigates data filtering during RL post-training, motivated by the observation that not all rollouts contribute equally to policy improvement. Offline pre-filtering removes prompts that are too easy or too hard via a one-time difficulty scoring pass~\citep{singh2024beyond,gulcehre2023rest}, but cannot adapt as the policy evolves. Online rejection methods, including Best-of-$N$ sampling~\citep{cobbe2021training,yuan2023scaling,yuan2024self}, iterative self-training~\citep{dong2023raft,pang2024iterative}, and Reinforce-Rej~\citep{wei2025minimalist}, operate adaptively but apply binary keep-or-drop decisions that discard the continuous gradient signal. Reward- and signal-driven re-weighting methods use curiosity signals or mistake patterns~\citep{pathak2017curiosity,wang2023learning}; process-level verification provides finer-grained filtering~\citep{uesato2022solving,lightman2024lets}, while generative reward modeling offers an alternative verifier design~\citep{zhang2025generative}, all at the cost of auxiliary models. DPS~\cite{mao2026dynamicspredictive}, a recent method, reduces redundant rollouts by predicting prompt solvability via a hidden Markov model, but necessitates online Bayesian inference and a learned generative state model, inevitably incurring additional overhead. \textsc{LZE} differs from all of the above: the Energy Score is a continuous, closed-form criterion requiring no auxiliary models, and naturally subsumes binary filtering as a special case.
\vspace{-5pt}
\section{Conclusion}
\vspace{-5pt}
In this work, we propose \textbf{Learning-Zone Energy (LZE)}, a fully online data selection framework for efficient RL post-training. The central insight is that GRPO gradient variance is approximately proportional to the pass-rate variance $p(1-p)$, which peaks in the intermediate regime where the model neither consistently succeeds nor consistently fails. Building on this, the Learning-Zone Energy Score fuses \textit{initial difficulty}, \textit{outcome uncertainty}, and \textit{pass-rate momentum} into a closed-form criterion updated at every training step without auxiliary models. Empirical results demonstrate that \textsc{LZE} achieves superior performance on nearly all benchmarks while training on only 40\% of prompts per step, with particularly pronounced gains on OOD settings.
One limitation of the current study is that all experiments focus on mathematical reasoning tasks with binary verifiable rewards, so whether the same scoring rule transfers to open-ended or noisy-reward settings remains an important direction for future work.

\bibliographystyle{plainnat}
\bibliography{neurips_2026}

@misc{he2025justrl,
  title={{JustRL}: Scaling a 1.5B {LLM} with a Simple {RL} Recipe},
  author={Bingxiang He and Zekai Qu and Zeyuan Liu and Yinghao Chen and Yuxin Zuo and Cheng Qian and Kaiyan Zhang and Weize Chen and Chaojun Xiao and Ganqu Cui and Ning Ding and Zhiyuan Liu},
  year={2025},
  eprint={2512.16649},
  archivePrefix={arXiv},
  primaryClass={cs.CL},
}

@inproceedings{mao2026dynamicspredictive,
  title={{Dynamics-Predictive Sampling for Active RL Finetuning of Large Reasoning Models}},
  author={Yixiu Mao and Yun Qu and Cheems Wang and Heming Zou and Xiangyang Ji},
  booktitle={International Conference on Learning Representations (ICLR)},
  year={2026},
}

@inproceedings{sheng2024hybridflow,
  title={HybridFlow: A Flexible and Efficient RLHF Framework},
  url={http://dx.doi.org/10.1145/3689031.3696075},
  DOI={10.1145/3689031.3696075},
  booktitle={Proceedings of the Twentieth European Conference on Computer Systems},
  publisher={ACM},
  author={Sheng, Guangming and Zhang, Chi and Ye, Zilingfeng and Wu, Xibin and Zhang, Wang and Zhang, Ru and Peng, Yanghua and Lin, Haibin and Wu, Chuan},
  year={2025},
  month=Mar, pages={1279-1297},
}

@article{dong2023raft,
  title={{RAFT}: Reward rAnked FineTuning for Generative Foundation Model Alignment},
  author={Hanze Dong and Wei Xiong and Deepanshu Goyal and Yihan Zhang and Winnie Chow and Rui Pan and Shizhe Diao and Jipeng Zhang and KaShun SHUM and Tong Zhang},
  journal={Transactions on Machine Learning Research},
  issn={2835-8856},
  year={2023},
}

@article{singh2024beyond,
  title={Beyond Human Data: Scaling Self-Training for Problem-Solving with Language Models},
  author={Avi Singh and John D Co-Reyes and Rishabh Agarwal and Ankesh Anand and Piyush Patil and Xavier Garcia and Peter J Liu and James Harrison and Jaehoon Lee and Kelvin Xu and Aaron T Parisi and Abhishek Kumar and Alexander A Alemi and Alex Rizkowsky and Azade Nova and Ben Adlam and Bernd Bohnet and Gamaleldin Fathy Elsayed and Hanie Sedghi and Igor Mordatch and Isabelle Simpson and Izzeddin Gur and Jasper Snoek and Jeffrey Pennington and Jiri Hron and Kathleen Kenealy and Kevin Swersky and Kshiteej Mahajan and Laura A Culp and Lechao Xiao and Maxwell Bileschi and Noah Constant and Roman Novak and Rosanne Liu and Tris Warkentin and Yamini Bansal and Ethan Dyer and Behnam Neyshabur and Jascha Sohl-Dickstein and Noah Fiedel},
  journal={Transactions on Machine Learning Research},
  issn={2835-8856},
  year={2024},
}

@inproceedings{pang2024iterative,
  title = {{Iterative Reasoning Preference Optimization}},
  author = {Pang, Richard Yuanzhe and Yuan, Weizhe and Cho, Kyunghyun and He, He and Sukhbaatar, Sainbayar and Weston, Jason},
  booktitle = {Advances in Neural Information Processing Systems},
  doi = {10.52202/079017-3702},
  editor = {A. Globerson and L. Mackey and D. Belgrave and A. Fan and U. Paquet and J. Tomczak and C. Zhang},
  pages = {116617--116637},
  publisher = {Curran Associates, Inc.},
  volume = {37},
  year = {2024}
}

@inproceedings{wang2023selfconsistency,
  title={Self-Consistency Improves Chain of Thought Reasoning in Language Models},
  author={Xuezhi Wang and Jason Wei and Dale Schuurmans and Quoc V Le and Ed H. Chi and Sharan Narang and Aakanksha Chowdhery and Denny Zhou},
  booktitle={International Conference on Learning Representations (ICLR)},
  year={2023},
}

@misc{uesato2022solving,
  title={Solving math word problems with process- and outcome-based feedback},
  author={Jonathan Uesato and Nate Kushman and Ramana Kumar and Francis Song and Noah Siegel and Lisa Wang and Antonia Creswell and Geoffrey Irving and Irina Higgins},
  year={2022},
  eprint={2211.14275},
  archivePrefix={arXiv},
  primaryClass={cs.LG},
}

@inproceedings{lightman2024lets,
  title = {Let\textquotesingle s Verify Step by Step},
  author = {Lightman, Hunter and Kosaraju, Vineet and Burda, Yuri and Edwards, Harrison and Baker, Bowen and Lee, Teddy and Leike, Jan and Schulman, John  and Sutskever, Ilya and Cobbe, Karl},
  booktitle = {International Conference on Learning Representations (ICLR)},
  pages = {39578--39601},
  year = {2024}
}

@inproceedings{zhang2025generative,
  title = {Generative Verifiers: Reward Modeling as Next-Token Prediction},
  author = {Zhang, Lunjun and Hosseini, Arian and Bansal, Hritik and Kazemi, Seyed Mehran and Kumar, Aviral and Agarwal, Rishabh},
  booktitle = {International Conference on Learning Representations (ICLR)},
  pages = {12476--12505},
  year = {2025}
}

@misc{schulman2017proximal,
  title={Proximal Policy Optimization Algorithms},
  author={John Schulman and Filip Wolski and Prafulla Dhariwal and Alec Radford and Oleg Klimov},
  year={2017},
  eprint={1707.06347},
  archivePrefix={arXiv},
  primaryClass={cs.LG},
}

@article{williams1992simple,
  author = {Williams, Ronald J.},
  title = {Simple Statistical Gradient-Following Algorithms for Connectionist Reinforcement Learning},
  year = {1992},
  issue_date = {May 1992},
  volume = {8},
  number = {3-4},
  issn = {0885-6125},
  doi = {10.1007/BF00992696},
  journal = {Mach. Learn.},
  month = may,
  pages = {229-256},
  numpages = {28}
}

@inproceedings{wei2022chain,
  author = {Wei, Jason and Wang, Xuezhi and Schuurmans, Dale and Bosma, Maarten and Ichter, Brian and Xia, Fei and Chi, Ed and Le, Quoc V and Zhou, Denny},
  booktitle = {Advances in Neural Information Processing Systems},
  pages = {24824--24837},
  title = {Chain-of-Thought Prompting Elicits Reasoning in Large Language Models},
  volume = {35},
  year = {2022}
}

@inproceedings{yao2023tree,
  author = {Yao, Shunyu and Yu, Dian and Zhao, Jeffrey and Shafran, Izhak and Griffiths, Tom and Cao, Yuan and Narasimhan, Karthik},
  booktitle = {Advances in Neural Information Processing Systems},
  editor = {A. Oh and T. Naumann and A. Globerson and K. Saenko and M. Hardt and S. Levine},
  pages = {11809--11822},
  publisher = {Curran Associates, Inc.},
  title = {Tree of Thoughts: Deliberate Problem Solving with Large Language Models},
  volume = {36},
  year = {2023}
}

@inproceedings{zelikman2022star,
  author = {Zelikman, Eric and Wu, Yuhuai and Mu, Jesse and Goodman, Noah},
  booktitle = {Advances in Neural Information Processing Systems},
  editor = {S. Koyejo and S. Mohamed and A. Agarwal and D. Belgrave and K. Cho and A. Oh},
  pages = {15476--15488},
  publisher = {Curran Associates, Inc.},
  title = {STaR: Bootstrapping Reasoning With Reasoning},
  volume = {35},
  year = {2022}
}

@inproceedings{ouyang2022training,
  author = {Ouyang, Long and Wu, Jeffrey and Jiang, Xu and Almeida, Diogo and Wainwright, Carroll and Mishkin, Pamela and Zhang, Chong and Agarwal, Sandhini and Slama, Katarina and Ray, Alex and Schulman, John and Hilton, Jacob and Kelton, Fraser and Miller, Luke and Simens, Maddie and Askell, Amanda and Welinder, Peter and Christiano, Paul F and Leike, Jan and Lowe, Ryan},
  booktitle = {Advances in Neural Information Processing Systems},
  editor = {S. Koyejo and S. Mohamed and A. Agarwal and D. Belgrave and K. Cho and A. Oh},
  pages = {27730--27744},
  publisher = {Curran Associates, Inc.},
  title = {Training language models to follow instructions with human feedback},
  volume = {35},
  year = {2022}
}

@inproceedings{ahmadian2024back,
  title={Back to Basics: Revisiting {REINFORCE}-Style Optimization for Learning from Human Feedback in {LLM}s},
  author={Ahmadian Arash and Cremer Chris and Gall{\'e} Matthias and Fadaee Marzieh and Kreutzer Julia and Pietquin Olivier and {\"U}st{\"u}n Ahmet and Hooker Sara},
  booktitle={Proceedings of the 62nd Annual Meeting of the Association for Computational Linguistics (Volume 1: Long Papers)},
  month=aug,
  year={2024},
  publisher={Association for Computational Linguistics},
  doi={10.18653/v1/2024.acl-long.662},
  pages={12248--12267},
}

@misc{gulcehre2023rest,
  title={Reinforced Self-Training (ReST) for Language Modeling},
  author={Caglar Gulcehre and Tom Le Paine and Srivatsan Srinivasan and Ksenia Konyushkova and Lotte Weerts and Abhishek Sharma and Aditya Siddhant and Alex Ahern and Miaosen Wang and Chenjie Gu and Wolfgang Macherey and Arnaud Doucet and Orhan Firat and Nando de Freitas},
  year={2023},
  eprint={2308.08998},
  archivePrefix={arXiv},
  primaryClass={cs.CL},
}

@inproceedings{kool2019buy,
  author={Wouter Kool and Herke van Hoof and Max Welling},
  title={Buy 4 {REINFORCE} Samples, Get a Baseline for Free!},
  booktitle={Deep Reinforcement Learning Meets Structured Prediction, {ICLR} 2019 Workshop, New Orleans, Louisiana, United States, May 6, 2019},
  year={2019},
}

@misc{deepseek2024deepseekmath,
  title={DeepSeekMath: Pushing the Limits of Mathematical Reasoning in Open Language Models},
  author={Zhihong Shao and Peiyi Wang and Qihao Zhu and Runxin Xu and Junxiao Song and Xiao Bi and Haowei Zhang and Mingchuan Zhang and Y. K. Li and Y. Wu and Daya Guo},
  year={2024},
  eprint={2402.03300},
  archivePrefix={arXiv},
  primaryClass={cs.CL},
}

@article{deepseek2025deepseekr1,
  title={DeepSeek-R1 incentivizes reasoning in LLMs through reinforcement learning},
  volume={645},
  ISSN={1476-4687},
  DOI={10.1038/s41586-025-09422-z},
  number={8081},
  journal={Nature},
  author={Deepseek AI},
  year={2025},
  month=sep,
  pages={633--638}}

@misc{wei2025minimalist,
  title={A Minimalist Approach to {LLM} Reasoning: from Rejection Sampling to Reinforce},
  author={Wei Xiong and Jiarui Yao and Yuhui Xu and Bo Pang and Lei Wang and Doyen Sahoo and Junnan Li and Nan Jiang and Tong Zhang and Caiming Xiong and Hanze Dong},
  year={2025},
  eprint={2504.11343},
  archivePrefix={arXiv},
  primaryClass={cs.LG},
}

@inproceedings{bengio2009curriculum,
  author = {Bengio, Yoshua and Louradour, J\'{e}r\^{o}me and Collobert, Ronan and Weston, Jason},
  title = {Curriculum learning},
  year = {2009},
  isbn = {9781605585161},
  doi = {10.1145/1553374.1553380},
  booktitle = {Proceedings of the 26th Annual International Conference on Machine Learning},
  pages = {41--48},
  numpages = {8},
  series = {ICML '09}
}

@inproceedings{kumar2010self,
  title = {Self-Paced Learning for Latent Variable Models},
  author = {Kumar, M. and Packer, Benjamin and Koller, Daphne},
  booktitle = {Advances in Neural Information Processing Systems},
  pages = {1189--1197},
  volume = {23},
  year = {2010}
}

@article{abbas2023semdedup,
  title={SemDeDup: Data-Efficient Learning at Web-Scale through Semantic Deduplication},
  author={Abbas, Amro and Tirumala, Kushal and Simig, Daniel and Ganguli, Surya and Morcos, Ari S.},
  journal={arXiv preprint arXiv:2303.09540},
  eprint={2303.09540},
  archivePrefix={arXiv},
  year={2023}
}

@inproceedings{xia2024less,
  title={{LESS}: Selecting Influential Data for Targeted Instruction Tuning},
  author={Mengzhou Xia and Sadhika Malladi and Suchin Gururangan and Sanjeev Arora and Danqi Chen},
  booktitle={Forty-first International Conference on Machine Learning},
  series={Proceedings of Machine Learning Research},
  pages={54104--54132},
  year={2024}
}

@inproceedings{liu2024what,
  title = {What Makes Good Data for Alignment? A Comprehensive Study of Automatic Data Selection in Instruction Tuning},
  author = {Liu, Wei and Zeng, Weihao and He, Keqing and Jiang, Yong and He, Junxian},
  booktitle = {International Conference on Learning Representations (ICLR)},
  pages = {22353--22373},
  year = {2024}
}

@inproceedings{zhou2023lima,
  author = {Zhou, Chunting and Liu, Pengfei and Xu, Puxin and Iyer, Srinivasan and Sun, Jiao and Mao, Yuning and Ma, Xuezhe and Efrat, Avia and Yu, Ping and YU, LILI and Zhang, Susan and Ghosh, Gargi and Lewis, Mike and Zettlemoyer, Luke and Levy, Omer},
  booktitle = {Advances in Neural Information Processing Systems},
  pages = {55006--55021},
  title = {LIMA: Less Is More for Alignment},
  volume = {36},
  year = {2023}
}

@inproceedings{chen2024alpagasus,
  author = {Chen, Lichang and Li, Shiyang and Yan, Jun and Wang, Hai and Gunaratna, Kalpa and Yadav, Vikas and Tang, Zheng and Srinivasan, Vijay and Zhou, Tianyi and Huang, Heng and Jin, Hongxia},
  booktitle = {International Conference on Learning Representations (ICLR)},
  pages = {34767--34797},
  title = {AlpaGasus: Training a Better Alpaca with Fewer Data},
  year = {2024}
}

@inproceedings{shrivastava2016training,
  author = {Abhinav Shrivastava and Abhinav Gupta and Ross Girshick},
  title = {Training Region-based Object Detectors with Online Hard Example Mining},
  booktitle = {Proceedings of the IEEE Conference on Computer Vision and Pattern Recognition},
  year = {2016},
  month = {June},
  pages = {761--769},
}

@inproceedings{sorscher2022beyond,
  author = {Sorscher, Ben and Geirhos, Robert and Shekhar, Shashank and Ganguli, Surya and Morcos, Ari},
  booktitle = {Advances in Neural Information Processing Systems},
  pages = {19523--19536},
  title = {Beyond neural scaling laws: beating power law scaling via data pruning},
  volume = {35},
  year = {2022}
}

@inproceedings{brown2020language,
  author = {Brown, Tom and Mann, Benjamin and Ryder, Nick and Subbiah, Melanie and Kaplan, Jared D and Dhariwal, Prafulla and Neelakantan, Arvind and Shyam, Pranav and Sastry, Girish and Askell, Amanda and Agarwal, Sandhini and Herbert-Voss, Ariel and Krueger, Gretchen and Henighan, Tom and Child, Rewon and Ramesh, Aditya and Ziegler, Daniel and Wu, Jeffrey and Winter, Clemens and Hesse, Chris and Chen, Mark and Sigler, Eric and Litwin, Mateusz and Gray, Scott and Chess, Benjamin and Clark, Jack and Berner, Christopher and McCandlish, Sam and Radford, Alec and Sutskever, Ilya and Amodei, Dario},
  booktitle = {Advances in Neural Information Processing Systems},
  pages = {1877--1901},
  title = {Language Models are Few-Shot Learners},
  volume = {33},
  year = {2020}
}

@misc{cobbe2021training,
  title={Training Verifiers to Solve Math Word Problems},
  author={Karl Cobbe and Vineet Kosaraju and Mohammad Bavarian and Mark Chen and Heewoo Jun and Lukasz Kaiser and Matthias Plappert and Jerry Tworek and Jacob Hilton and Reiichiro Nakano and Christopher Hesse and John Schulman},
  year={2021},
  eprint={2110.14168},
  archivePrefix={arXiv},
  primaryClass={cs.LG},
}

@inproceedings{hendrycks2021math,
  author = {Hendrycks, Dan and Burns, Collin and Kadavath, Saurav and Arora, Akul and Basart, Steven and Tang, Eric and Song, Dawn and Steinhardt, Jacob},
  booktitle = {Proceedings of the Neural Information Processing Systems Track on Datasets and Benchmarks},
  title = {Measuring Mathematical Problem Solving With the {MATH} Dataset},
  volume = {1},
  year = {2021}
}

@inproceedings{kuhn2023semantic,
  title={Semantic Uncertainty: Linguistic Invariances for Uncertainty Estimation in Natural Language Generation},
  author={Lorenz Kuhn and Yarin Gal and Sebastian Farquhar},
  booktitle={International Conference on Learning Representations (ICLR)},
  year={2023},
}

@misc{liu2025understanding,
  title={Understanding R1-Zero-Like Training: A Critical Perspective},
  author={Zichen Liu and Changyu Chen and Wenjun Li and Penghui Qi and Tianyu Pang and Chao Du and Wee Sun Lee and Min Lin},
  year={2025},
  eprint={2503.20783},
  archivePrefix={arXiv},
  primaryClass={cs.LG},
}

@inproceedings{lin2017focal,
  author={Lin, Tsung-Yi and Goyal, Priya and Girshick, Ross and He, Kaiming and Dollar, Piotr},
  booktitle={2017 IEEE International Conference on Computer Vision (ICCV)},
  title={Focal Loss for Dense Object Detection},
  year={2017},
  pages={2999--3007},
  doi={10.1109/ICCV.2017.324}
}

@inproceedings{lin2024rho1,
  author = {Lin, Zhenghao and Gou, Zhibin and Gong, Yeyun and Liu, Xiao and Shen, Yelong and Xu, Ruochen and Lin, Chen and Yang, Yujiu and Jiao, Jian and Duan, Nan and Chen, Weizhu},
  booktitle = {Advances in Neural Information Processing Systems},
  doi = {10.52202/079017-0914},
  pages = {29029--29063},
  title = {Not All Tokens Are What You Need for Pretraining},
  volume = {37},
  year = {2024}
}

@inproceedings{lu2024instag,
  author = {Lu, Keming and Yuan, Hongyi and Yuan, Zheng and Lin, Runji and Lin, Junyang and Tan, Chuanqi and Zhou, Chang and Zhou, Jingren},
  booktitle = {International Conference on Learning Representations (ICLR)},
  pages = {36456--36474},
  title = {\#InsTag: Instruction Tagging for Analyzing Supervised Fine-tuning of Large Language Models},
  year = {2024}
}

@misc{openai2023gpt4,
  title={GPT-4 Technical Report},
  author={OpenAI},
  year={2023},
  eprint={2303.08774},
  archivePrefix={arXiv},
  primaryClass={cs.CL},
}

@inproceedings{pathak2017curiosity,
  author={Pathak, Deepak and Agrawal, Pulkit and Efros, Alexei A. and Darrell, Trevor},
  booktitle={2017 IEEE Conference on Computer Vision and Pattern Recognition Workshops (CVPRW)},
  title={Curiosity-Driven Exploration by Self-Supervised Prediction},
  year={2017},
  pages={488-489},
  doi={10.1109/CVPRW.2017.70}
}

@misc{qwen2025qwen3,
  title={Qwen3 Technical Report},
  author={An Yang and Anfeng Li and Baosong Yang and Beichen Zhang and Binyuan Hui and Bo Zheng and Bowen Yu and Chang Gao and Chengen Huang and Chenxu Lv and Chujie Zheng and Dayiheng Liu and Fan Zhou and Fei Huang and Feng Hu and Hao Ge and Haoran Wei and Huan Lin and Jialong Tang and Jian Yang and Jianhong Tu and Jianwei Zhang and Jianxin Yang and Jiaxi Yang and Jing Zhou and Jingren Zhou and Junyang Lin and Kai Dang and Keqin Bao and Kexin Yang and Le Yu and Lianghao Deng and Mei Li and Mingfeng Xue and Mingze Li and Pei Zhang and Peng Wang and Qin Zhu and Rui Men and Ruize Gao and Shixuan Liu and Shuang Luo and Tianhao Li and Tianyi Tang and Wenbiao Yin and Xingzhang Ren and Xinyu Wang and Xinyu Zhang and Xuancheng Ren and Yang Fan and Yang Su and Yichang Zhang and Yinger Zhang and Yu Wan and Yuqiong Liu and Zekun Wang and Zeyu Cui and Zhenru Zhang and Zhipeng Zhou and Zihan Qiu},
  year={2025},
  eprint={2505.09388},
  archivePrefix={arXiv},
  primaryClass={cs.CL},
}

@misc{qwen2024math,
  title={Qwen2.5-Math Technical Report: Toward Mathematical Expert Model via Self-Improvement},
  author={An Yang and Beichen Zhang and Binyuan Hui and Bofei Gao and Bowen Yu and Chengpeng Li and Dayiheng Liu and Jianhong Tu and Jingren Zhou and Junyang Lin and Keming Lu and Mingfeng Xue and Runji Lin and Tianyu Liu and Xingzhang Ren and Zhenru Zhang},
  year={2024},
  eprint={2409.12122},
  archivePrefix={arXiv},
  primaryClass={cs.CL},
}

@inproceedings{wang2023learning,
  title = {Learning from Mistakes via Cooperative Study Assistant for Large Language Models},
  author ={Wang, Danqing and Li, Lei},
  booktitle = {Proceedings of the 2023 Conference on Empirical Methods in Natural Language Processing},
  month = dec,
  year = {2023},
  doi = {10.18653/v1/2023.emnlp-main.659},
  pages = {10667--10685},
}

@inproceedings{xie2023data,
  author = {Xie, Sang Michael and Santurkar, Shibani and Ma, Tengyu and Liang, Percy S},
  booktitle = {Advances in Neural Information Processing Systems},
  pages = {34201--34227},
  title = {Data Selection for Language Models via Importance Resampling},
  volume = {36},
  year = {2023}
}

@inproceedings{yu2025dapo,
  author = {Yu, Qiying and Zhang, Zheng and Zhu, Ruofei and Yuan, Yufeng and Zuo, Xiaochen and Yue, Yu and Dai, Weinan and Fan, Tiantian and Liu, Gaohong and liu, juncai and Liu, LingJun and Liu, Xin and Lin, Haibin and Lin, Zhiqi and Ma, Bole and Sheng, Guangming and Tong, Yuxuan and Zhang, Chi and Zhang, Mofan and Zhang, Ru and Zhang, Wang and Zhu, Hang and Zhu, Jinhua and Chen, Jiaze and Chen, Jiangjie and Wang, Chengyi and Yu, Hongli and Song, Yuxuan and Wei, Xiangpeng and Zhou, Hao and Liu, Jingjing and Ma, Wei-Ying and Zhang, Ya-Qin and Yan, Lin and Wu, Yonghui and Wang, Mingxuan},
  booktitle = {Advances in Neural Information Processing Systems},
  pages = {113222--113244},
  title = {DAPO: An Open-Source LLM Reinforcement Learning System at Scale},
  volume = {38},
  year = {2025}
}

@misc{yuan2024self,
  title={Self-Rewarding Language Models},
  author={Weizhe Yuan and Richard Yuanzhe Pang and Kyunghyun Cho and Xian Li and Sainbayar Sukhbaatar and Jing Xu and Jason Weston},
  year={2024},
  eprint={2401.10020},
  archivePrefix={arXiv},
  primaryClass={cs.CL},
}

@misc{yuan2023scaling,
  title={Scaling Relationship on Learning Mathematical Reasoning with Large Language Models},
  author={Zheng Yuan and Hongyi Yuan and Chengpeng Li and Guanting Dong and Keming Lu and Chuanqi Tan and Chang Zhou and Jingren Zhou},
  year={2023},
  eprint={2308.01825},
  archivePrefix={arXiv},
  primaryClass={cs.CL},
}

@misc{qwen2024qwen25,
  title={{Qwen2.5} Technical Report},
  author={An Yang and Baosong Yang and Beichen Zhang and Binyuan Hui and Bo Zheng and Bowen Yu and Chengyuan Li and Dayiheng Liu and Fei Huang and Haoran Wei and Huan Lin and Jian Yang and Jianhong Tu and Jianwei Zhang and Jianxin Yang and Jiaxi Yang and Jingren Zhou and Junyang Lin and Kai Dang and Keming Lu and Keqin Bao and Kexin Yang and Le Yu and Mei Li and Mingfeng Xue and Pei Zhang and Qin Zhu and Rui Men and Runji Lin and Tianhao Li and Tianyi Tang and Tingyu Xia and Xingzhang Ren and Xuancheng Ren and Yang Fan and Yang Su and Yichang Zhang and Yu Wan and Yuqiong Liu and Zeyu Cui and Zhenru Zhang and Zihan Qiu},
  year={2024},
  eprint={2412.15115},
  archivePrefix={arXiv},
  primaryClass={cs.CL},
}

\newpage
\appendix
\addtocontents{toc}{\protect\setcounter{tocdepth}{2}}
\renewcommand{\contentsname}{Appendices}
\tableofcontents
\newpage
\section{Case Study: Data Selection Rationale}
\label{sec:case_study}
\begin{figure}[H]
\centering
\includegraphics[width=\textwidth]{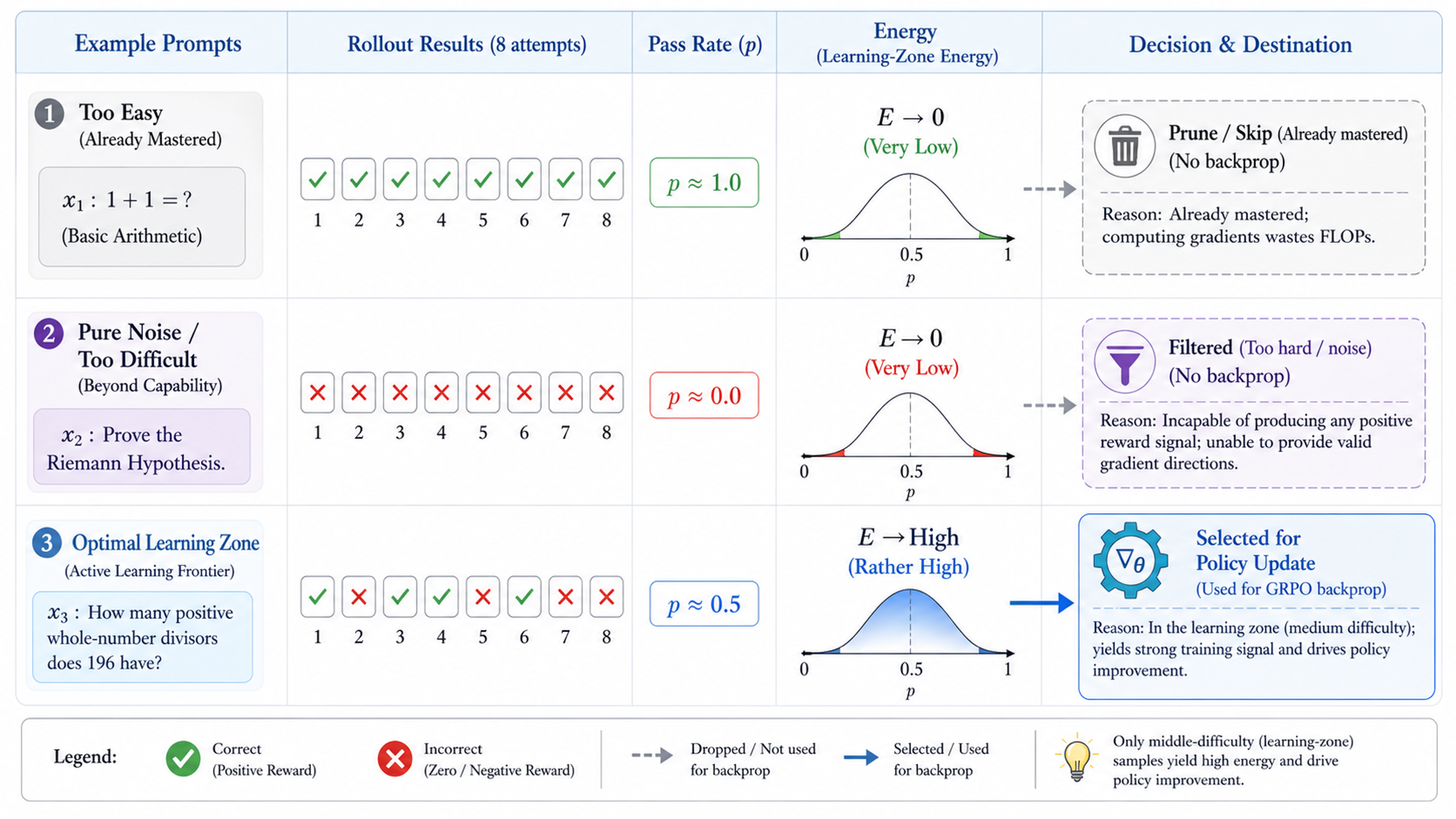}
\caption{\textbf{Conceptual case study of the learning zone.} Trivial prompts that are already solved ($p \approx 1$) and overwhelmingly hard prompts that consistently fail ($p \approx 0$) carry negligible Learning-Zone Energy and are therefore deprioritized. In contrast, frontier prompts with mixed rollout outcomes ($p \approx 0.5$) receive the highest uncertainty weight and remain actively selected for GRPO updates.}
\label{fig:case_study}
\end{figure}

\section{Details of Models and Datasets}
\label{sec:model_data_details}

\paragraph{Models.}
We evaluate two families of open-weight Qwen reasoning models. The Qwen2.5-Math line, including Qwen2.5-Math-1.5B-Instruct (\url{https://huggingface.co/Qwen/Qwen2.5-Math-1.5B-Instruct}) and Qwen2.5-Math-7B-Instruct (\url{https://huggingface.co/Qwen/Qwen2.5-Math-7B-Instruct}), is specialized for mathematical reasoning and provides our main math-centric backbone. The Qwen3 line, including Qwen3-1.7B (\url{https://huggingface.co/Qwen/Qwen3-1.7B}), Qwen3-4B (\url{https://huggingface.co/Qwen/Qwen3-4B}), and Qwen3-8B (\url{https://huggingface.co/Qwen/Qwen3-8B}), offers broader reasoning capacity and stronger general-purpose post-training behavior; for all Qwen3 experiments we enable thinking mode throughout training and evaluation. According to the corresponding Hugging Face model cards, all five checkpoints are released under the Apache-2.0 license.

\paragraph{Datasets.}
We train on three public mathematical reasoning datasets with progressively increasing difficulty. GSM8K (\url{https://huggingface.co/datasets/openai/gsm8k}) contains relatively short grade-school arithmetic word problems and serves as the easiest setting in our study. MATH (\url{https://huggingface.co/datasets/EleutherAI/hendrycks_math}) contains substantially harder competition-style problems spanning algebra, geometry, counting, and number theory. DAPO-MATH (English split; \url{https://huggingface.co/datasets/BytedTsinghua-SIA/DAPO-Math-17k}) is the most challenging training source used here, with longer solution trajectories and more diverse reasoning patterns, making it especially suitable for probing efficiency gains under harder post-training regimes. For evaluation, we use the corresponding held-out test splits of GSM8K, MATH, and DAPO-MATH from the same repositories, together with the OOD benchmarks AMC23 (\url{https://huggingface.co/datasets/JingzeShi/amc23}) and AIME25 (\url{https://huggingface.co/datasets/math-ai/aime25}); these are exactly the benchmark sets reported in Table~\ref{tab:pass1_all} and Table~\ref{tab:comparison_methods} of Section~\ref{sec:experiments}.

\paragraph{Licenses.}
The external assets used in our experiments are all taken from public model or dataset cards that state their licenses explicitly. The Qwen checkpoints are Apache-2.0; GSM8K and MATH are MIT; DAPO-MATH and AIME25 are Apache-2.0; and the AMC23 mirror used in our evaluation setup is MIT.

\section{Chat Templates and Prompt Formatting}
\label{sec:chat_template}

\begin{tcolorbox}[colback=green!5!white,colframe=green!50!black,title=ChatML System and User Template]
\begin{verbatim}
<|im_start|>system
You are a helpful mathematical reasoning assistant.
<|im_end|>
<|im_start|>user
Please reason step by step, and
put your final answer within \boxed{}.
{Question}
<|im_end|>
<|im_start|>assistant
\end{verbatim}
\end{tcolorbox}

\section{Hardware and Runtime Details}
\label{sec:hardware_runtime}

All experiments are conducted on 8 NVIDIA Blackwell-series GPUs. For runs trained on GSM8K or MATH, each configuration requires roughly 3.5 hours on average. Runs trained on DAPO-MATH are longer because of the more difficult prompts and longer reasoning trajectories, with an average duration of about 8 hours per configuration. Relative to the full-data baseline, our method reduces end-to-end runtime by approximately 30\%--40\% across the reported settings.

\section{Detailed Computational Efficiency Analysis}
\label{sec:flops_derivation}

In this section, we provide a rigorous mathematical derivation of the computational footprint associated with our reinforcement learning (RL) training pipeline. We contextualize our footprint within the standard \texttt{veRL} framework execution to derive the baseline theoretical coefficient of $10$, which we partition into inference and optimization phases, measured in floating-point operations (FLOPs) per parameter per token.

Let $P$ denote the number of parameters in the policy model, and $T$ represent the average total sequence length ($T = L_{\mathrm{prompt}} + L_{\mathrm{response}}$).

\textbf{Inference and Generation Cost.}
During the data collection phase, the policy model autoregressively generates $n_{\mathrm{rollout}}$ responses for each prompt. Generating a single token requires a forward pass through the active policy network. In dense causual Transformer architectures, KV-cache-assisted token decoding incurs approximately $2P$ FLOPs per token.
Following standard engineering implementations of GRPO (e.g., the \texttt{veRL} framework using vLLM), the evaluation for the reference model logic is computationally merged or evaluated without gradient tracking. A reward model is subsequently invoked to evaluate the sequences, costing an additional $2P$ FLOPs per token for the forward pass.
Therefore, the generative inference scaling factor analytically resolves to:
Eq.~\eqref{eq:flops_infer}:
\begin{equation}
\label{eq:flops_infer}
C_{\mathrm{infer}} = \underbrace{2}_{\text{Actor Generation}} + \underbrace{2}_{\text{Reward Model Scoring}} = 4.
\end{equation}

\textbf{Optimization and Update Cost.}
During the optimization phase, only the actor (policy) model's weights undergo iterative updates via backpropagation. No critic model is utilized in GRPO. For a decoder-only LLM, the optimization phase incorporates a forward pass (activations) and a backward pass (gradients with respect to activations and learnable weights). The forward pass necessitates $2P$ FLOPs per token. The backward pass demands approximately $4P$ FLOPs per token ($2P$ for activation gradients, and $2P$ for weight gradients).
The underlying computational coefficient for optimization is modeled strictly as:
Eq.~\eqref{eq:flops_optim}:
\begin{equation}
\label{eq:flops_optim}
C_{\mathrm{optim}} = \underbrace{2}_{\text{Training Forward}} + \underbrace{4}_{\text{Training Backward}} = 6.
\end{equation}

\textbf{Total Baseline and Data Selection Savings.}
Cumulatively, the theoretical cost baseline per token scales with $C_{\mathrm{total}} = C_{\mathrm{infer}} + C_{\mathrm{optim}} = 10$. For a global dataset $D$ and epoch count $E$, the baseline expense is given by Eq.~\eqref{eq:flops_base}:
\begin{equation}
\label{eq:flops_base}
\mathrm{FLOPs}_{\mathrm{base}} \approx 10 \cdot E \cdot |D| \cdot n_{\mathrm{rollout}} \cdot P \cdot T.
\end{equation}

Our learning-zone data filtering regimen dynamically selects and retains only the prime $40\%$ of sample groups mapping backpropagation. Generative inferences and objective energy scorings remain necessarily fully evaluated. The modified compute footprint explicitly rescales the optimization term, yielding Eq.~\eqref{eq:flops_ours}:
\begin{equation}
\label{eq:flops_ours}
\mathrm{FLOPs}_{\mathrm{ours}} \approx (C_{\mathrm{infer}} + 0.4 \cdot C_{\mathrm{optim}}) \cdot E \cdot |D| \cdot n_{\mathrm{rollout}} \cdot P \cdot T = (4 + 0.4 \cdot 6) \cdot (\dots) = 6.4 \cdot (\dots).
\end{equation}
Consequently, the structural computation conserved corresponds firmly to Eq.~\eqref{eq:flops_saved}:
\begin{equation}
\label{eq:flops_saved}
\mathrm{Saved\ FLOPs} = 1 - \frac{6.4}{10} = 36\%.
\end{equation}
This exact reduction isolates and eliminates redundant backward passes across uninformative state manifolds (fully-saturated or intrinsically out-of-distribution regimes), drastically widening practical scaling capabilities without compromising policy alignment limits.

\section{Additional Ablation Studies}
\label{sec:appendix_ablation}

\textbf{Impact of Rollout Count $N$: Unconstrained Upper Bound.}
Section~\ref{sec:ablation} (Figure~\ref{fig:ablation_rollout_n}) evaluates the effect of rollout count under a \emph{fixed total rollout budget} of $7.8\times10^5$. Here we remove the budget constraint and instead run each $N$ configuration for a fixed number of gradient steps (equal training iterations), so each experiment receives a different but unconstrained compute allocation. This measures the quality upper bound achievable by LZE at each $N$ independently of cost.

\begin{figure}[h]
  \centering
  \includegraphics[width=0.52\linewidth]{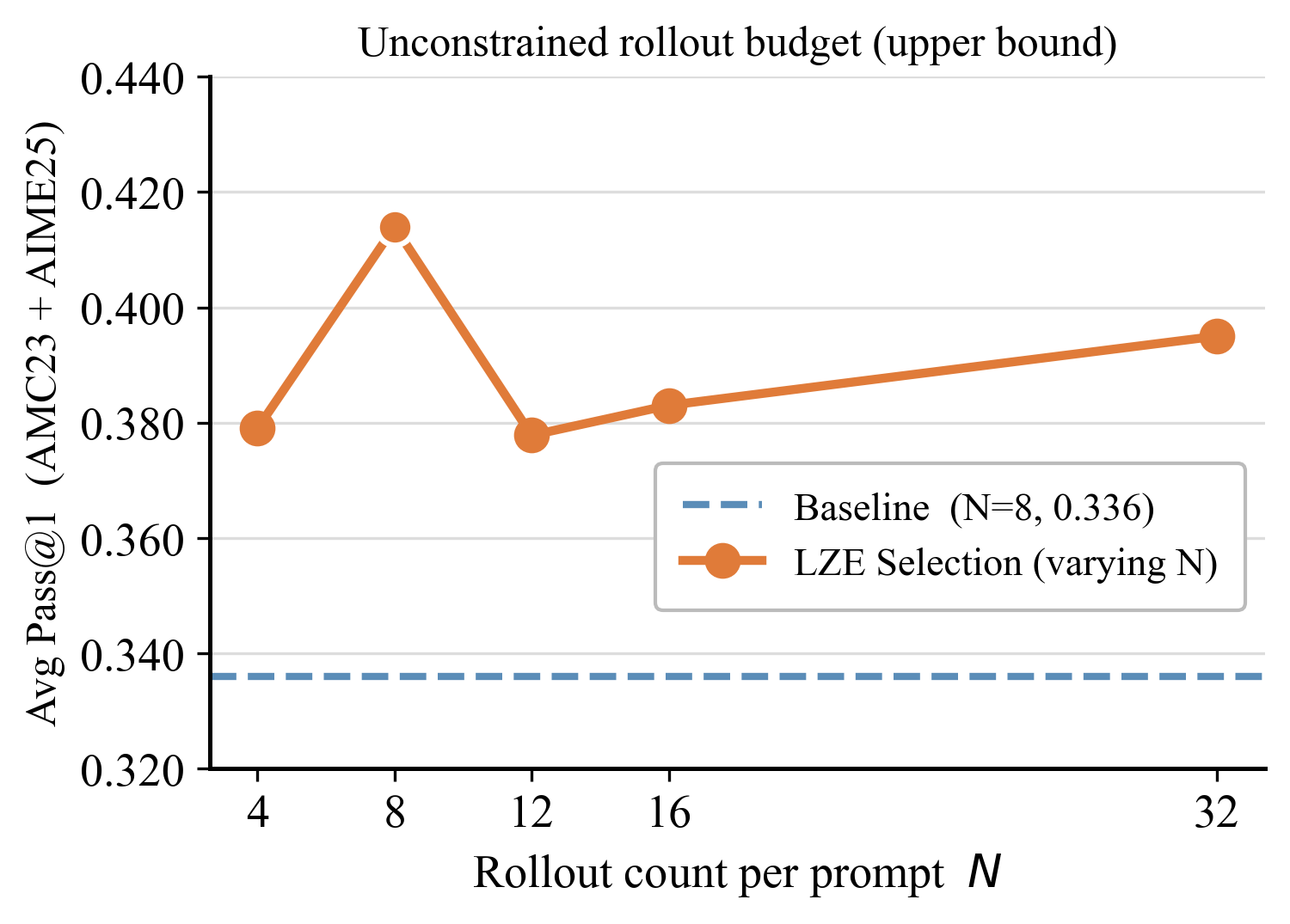}
  \caption{Rollout-$N$ sensitivity under an \emph{unconstrained} budget. Each $N$ runs for the same number of gradient steps; compute cost scales with $N$. Even under unconstrained conditions, LZE selection consistently outperforms the baseline at every $N$, and $N{=}8$ again achieves the best absolute performance.}
  \label{fig:ablation_rollout_n_appendix}
\end{figure}

As shown in Figure~\ref{fig:ablation_rollout_n_appendix}, the qualitative pattern mirrors the fixed-budget result: LZE selection outperforms the full-data baseline at every rollout count, and performance peaks at $N{=}8$ (Avg Pass@1 $= 0.414$ vs.\ baseline $0.336$).
Smaller $N$ ($= 4$) yields noisier group-level pass-rate estimates, making the Energy Score less reliable; larger $N$ ($\geq 12$) provides more stable estimates but offers diminishing returns—the gain from additional rollouts within a single step is small compared to the benefit of more gradient steps using the same compute. Together, the fixed-budget and unconstrained results confirm that $N{=}8$ is a robust default across both efficiency-constrained and quality-maximising regimes.

\textbf{Sensitivity to the Momentum Coefficient $\alpha$.}
The momentum term $(1+\alpha m_i^{(t)})$ in the Energy Score amplifies prompts where the pass rate is actively improving, with $\alpha = 0$ reducing the score to a purely uncertainty-based criterion (no momentum). We sweep $\alpha \in \{0,\,0.15,\,0.3,\,0.45\}$ on MATH with Qwen2.5-Math-1.5B and Qwen2.5-Math-7B, reporting Avg Pass@1 = (AMC23 + AIME25) / 2.

\begin{figure}[h]
  \centering
  \includegraphics[width=0.52\linewidth]{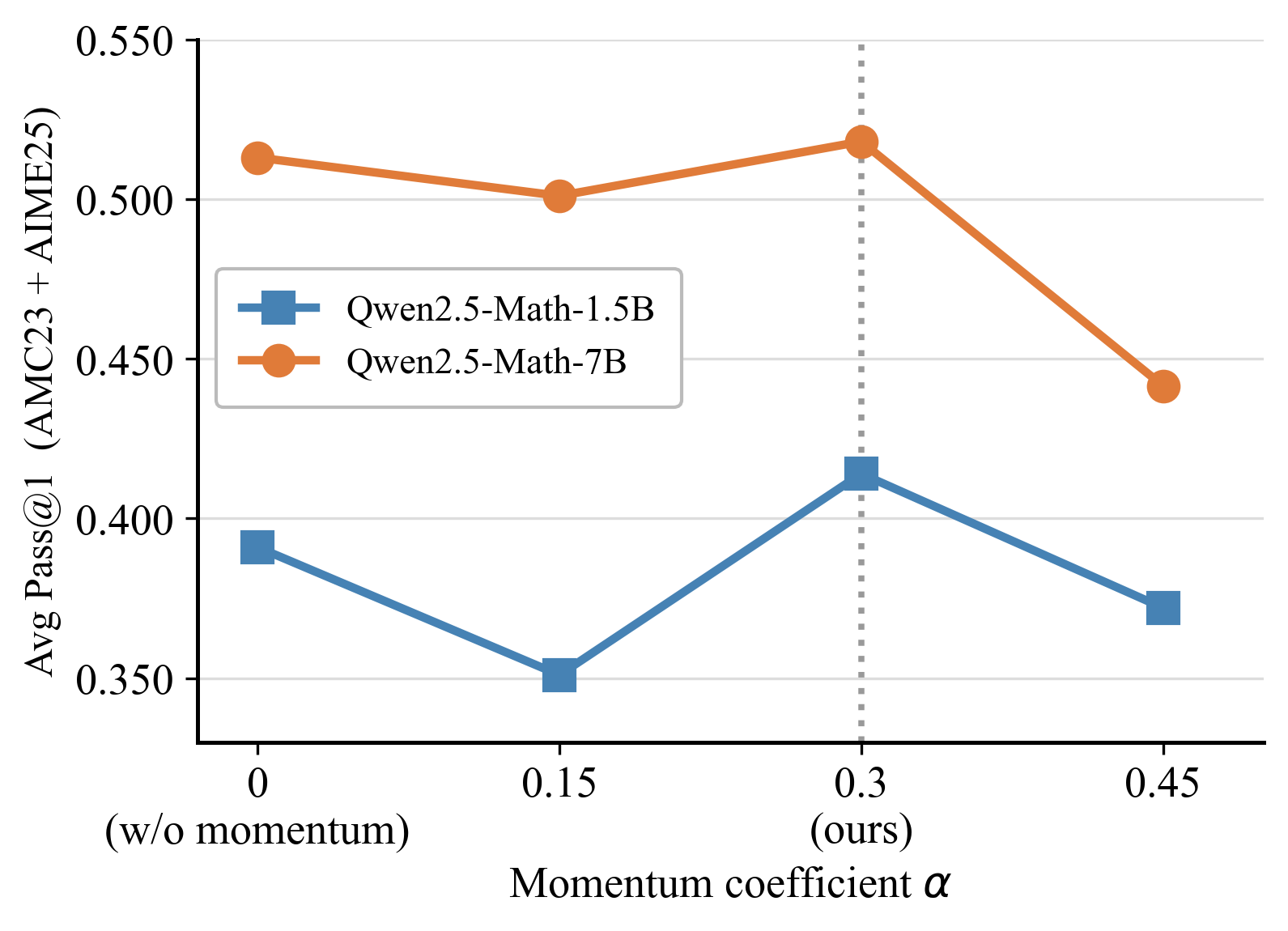}
  \caption{Sensitivity to the momentum coefficient $\alpha$. Both 1.5B and 7B models show consistent peaks at $\alpha = 0.3$ (our default). At $\alpha = 0$ the momentum term is absent; at $\alpha = 0.45$ the score becomes overly reactive, destabilising selection.}
  \label{fig:ablation_alpha}
\end{figure}

Figure~\ref{fig:ablation_alpha} reveals a consistent non-monotone pattern across both model scales. Setting $\alpha = 0$ (no momentum) already yields strong performance—confirming that the uncertainty term is the primary driver—but the momentum signal provides a meaningful further boost at $\alpha = 0.3$: +0.023 for 1.5B (0.391$\to$0.414) and +0.005 for 7B (0.513$\to$0.518). A smaller $\alpha = 0.15$ under-utilises the temporal change signal and falls below the no-momentum baseline for the 1.5B model, suggesting that weak momentum can transiently mislead selection before the EMA has time to smooth out. An excessively large $\alpha = 0.45$ makes the score overly sensitive to single-step fluctuations in the pass rate: transient improvements are over-rewarded, causing the selector to repeatedly re-select prompts that exhibited one lucky run but have not yet stably improved, which degrades the training distribution and reduces final accuracy. The sweet spot at $\alpha = 0.3$ strikes the best balance between responsiveness and stability across both model sizes, and we adopt it as the default for all experiments in the main paper.

\section{Use of LLMs}
The drafting of this manuscript was enhanced through the use of a large language model, assisting in grammatical refinement.

\section{Broader Impacts}
The potential negative social impacts of our method are minor, and align with those typically associated with general LLM reasoning technologies. We emphasise the importance of adhering to the principles of fair and safe deployment of LLM systems, including careful consideration of the training data, transparency about model capabilities and limitations, and ongoing monitoring for unintended consequences.
\end{document}